\documentclass[sigconf]{aamas}
\usepackage{graphicx} 
\usepackage[utf8]{inputenc}
\usepackage{enumitem}
\usepackage{xcolor}
\usepackage[capitalize]{cleveref}
\usepackage[normalem]{ulem}
\usepackage{xspace}

\usepackage{caption}
\usepackage{subcaption}

\newtheorem{theorem}{\bf Theorem} 

\theoremstyle{definition}
\newtheorem{definition}{Definition}

\usepackage{tikz}
\usetikzlibrary{arrows.meta, positioning, calc}

\newcommand*{\harmonic}{\mathbb{H}\xspace}

\newcommand*{\oldharmonic}{\mathcal{H}\xspace}

\newcommand*{\hname}{modified harmonic mean operator\xspace} 

\newcommand*{\rlearning}{$R$-Learning\xspace}

 \newcommand*{\harmonicr}{Harmonic $R$-Learning\xspace}

\usepackage{balance} 

    \setcopyright{none}
    \acmConference[ALA '26]%
    {Proc.\@ of the Adaptive and Learning Agents Workshop (ALA 2026)}%
    {May 25 -- 26, 2026}%
    {Paphos, Cyprus, https://alaworkshop2026.github.io/}%
    {Aydeniz, Delgrange, Mohammedalamen, Yang (eds.)}%
    \copyrightyear{2026}
    \acmYear{2026}
    \acmDOI{}
    \acmPrice{}
    \acmISBN{}
\acmSubmissionID{57}

\title[A Mean to No End]{A Mean to No End: Average Reward Reinforcement Learning in Semi-Markov Settings}

\title[]{Average Reward Reinforcement Learning in Semi-Markov Settings Using a Harmonic Mean}

\title[]{A Harmonic Mean Formulation of Average Reward Reinforcement Learning in SMDPs}

\author{Erel Shtossel}
\affiliation{
  \institution{Bar Ilan University}
  \city{Ramat Gan}
  \country{Israel}}

\author{Alicia Vidler}
\affiliation{
  \institution{Bar Ilan University}
  \city{Ramat Gan}
  \country{Israel}
\email{aliciavidler@gmail.com}}

\author{Uri Shaham}
\affiliation{
  \institution{Bar Ilan University}
  \city{Ramat Gan}
  \country{Israel}}

\author{Gal A. Kaminka}
\affiliation{
  \institution{Bar Ilan University}
  \city{Ramat Gan}
  \country{Israel}
\email{galk@cs.biu.ac.il}}

\begin{abstract}
Recent research has revived and amplified interest in algorithms for undiscounted average reward reinforcement learning in infinite-horizon, non-episodic (continuing) tasks.
Semi-Markov decision processes (SMDPs) are of particular interest. In SMDPs, discrete actions stochastically generate both \emph{rewards} and \emph{durations}, and the objective is to optimize the average \emph{reward rate}.
Existing algorithms approach this by optimizing the ratio of rewards to durations.
However, when rewards and durations are non-stationary (in the infinite horizon), this can be incorrect.
This paper presents a novel \hname that correctly computes reward rates even under such conditions.
This yields model-free learning algorithms that can work with SMDPs, while maintaining robustness to non-stationary reward and duration distributions over time. We prove theoretical properties of the \hname, and empirically demonstrate its efficacy in comparison to existing algorithms.

\end{abstract}

\keywords{Reinforcement Learning, Semi-Markov Decision Processes, R learning, Bitcoin, harmonic RL, financial markets}

\begin{document}
\sloppy

\pagestyle{fancy}
\fancyhead{}


\maketitle

\section{Introduction}
Recent research has revived and amplified interest in undiscounted average reward reinforcement learning~\cite{Schwartz93,mahadevan1996average,dewanto20}. Motivated by real-world applications, average reward reinforcement learning (ARL) addresses infinite-horizon, non-episodic (continuing) tasks. While less popular than the familiar discounted-rewards formulation for reinforcement learning, it is nonetheless a well known optimization criteria~\cite{puterman1994markov}.

While ARL has been studied from multiple perspectives in Markov Decision Processes (MDPs), there are almost no investigations
of ARL in semi-Markov settings (SMDP), where discrete actions have \textit{durations}.
In every decision step, an action is selected. It then takes a (stochastically) varying duration before the next state is known, and the rewards received.  The objective of ARL in SMDPs is to optimize the policy's average \emph{reward rate}.

To the best of our knowledge, only two algorithms were proposed for ARL in SMDP settings.
The first algorithm, SMART~\cite{smart}, uses the \textit{long-term sample average} to track the optimal reward rate. This makes it brittle when the rate is non-stationary, i.e., rewards and/or durations are not stationary. Relaxed-SMART~\cite{r-smart} handles the non-stationary by approximating the reward rate as the ratio of the \textit{Exponential Moving Average} (EMA) of the rewards, to the EMA of the durations.  This approximation is perfect when the rewards and durations are independent of each other, but faces limitations otherwise (as we show in~\cref{sec:harmonicmath}).

We note that rather than approximating the average rate as the ratio of arithmetic average of the rewards to the arithmetic average of the durations, the average rate can be directly calculated using the harmonic mean, i.e., as the harmonic average of the reward rates received in different steps. For example, the harmonic mean is the correct averaging operator to use in averaging velocities (distance/time), or throughput (units/time). 
Unfortunately, the harmonic mean is not well-defined for zero or mixed-sign rewards, and therefore ill-suited to general reinforcement learning (see \cref{sec:harmonicmath}).

To address this, we first present a novel \hname that correctly computes reward rates (including when rewards are negative or zero), and admits stochastic approximation: an exponential moving harmonic mean, rather than the exponential moving arithmetic mean so commonly used in reinforcement-learning literature. We prove theoretical properties of \hname, showing that it generalizes the familiar harmonic mean,  and that it obeys axiomatic definitions of means. We also show that it is not a member of the class of quasi-linear means, which includes arithmetic, geometric, and harmonic means.

Use of the \hname yields a novel model-free reinforcement learning algorithm for SMDPs---\harmonicr---that directly generalizes existing algorithms, and does not suffer from the same weaknesses as SMART or Relaxed-SMART.
We discuss the conditions under which \hname is better than the alternative (approximation by the ratio).
We empirically demonstrate the improvements offered by \harmonicr over existing algorithms in two domains with non-stationary, volatile reward rates: a simple two-state SMDP, and Bitcoin trading (using real data).

\vspace{-8pt}
\section{Background: Average Reward RL}\label{sec:motivation}
We begin with a brief reminder of the problem of optimizing the average undiscounted rewards in Markov Decision Processes (MDPs), and how it has been addressed in reinforcement learning literature (in~\cref{sec:mdp-rlearning}).  We then discuss such learning in Semi-Markov Decision Processes (SMDPs), where transition durations (\emph{sojourn} times) must
be taken into account (\cref{sec:smdp-smart}).

\subsection{Maximizing Average Undiscounted Reward}\label{sec:mdp-rlearning}
Markov Decision Processes (MDPs) are a popular formulation of sequential decision making problems. An infinite-horizon MDP is defined as a tuple $\langle \mathcal{S},\mathcal{A},\mathcal{T},\mathcal{R} \rangle$.
$\mathcal{S}$ is a set of states (at any time $t$, the agent is in a state $s_t\in\mathcal{S}$.).
$\mathcal{A}$ is a set of actions that the agent can take to transition between states. $\mathcal{T}:\mathcal{S}\times\mathcal{A}\times\mathcal{S}\mapsto [0,1]$ is a transition probability function, determining the probability that an action $a\in\mathcal{A}$, taken in a state $s\in\mathcal{S}$, will result in a transition to a new state $s'\in\mathcal{S}$.

The reward function kernel $\mathcal{R}:\mathbb{N}\times\mathcal{S}\times\mathcal{A}\times\mathcal{S}\mapsto \mathcal{P}(\mathbb{R})$,
maps a transition taken on step $t\in\mathbb{N}$ to a  probability distribution over rewards in $\mathbb{R}$. We write $\mathcal{R}_t(s,a,s')$ to define the distribution from which the system samples the reward received by the agent on taking the action $a$ in state $s$, and transitioning to the state $s'$ at time $t$. 
We use $R_t$  to denote the random variable drawn from this distribution.
The use of the index $t$ in the definition allows discussing non-stationary rewards, where the distribution of rewards may change with $t$.

A solution to an MDP is a
deterministic policy $\pi:\mathcal{S}\mapsto\mathcal{A}$, which assigns an action to each state, 
i.e., $\forall \, s_t\in\mathcal{S}, \exists a_t\in\mathcal{A} \; \text{such that } \pi(s_t) = a_t$.









The value of $R_t$ is determined considering both $\mathcal{T}$ (which determines the probability of all possible next states $s_{t+1}$, given the action) and the reward distribution $\mathcal{R}_t(s,a,s')$ (which determines the reward for reaching any specific $s_{t+1}$ from $s_t$, given the action determined by the policy $\pi$).
On taking an action $a_t$ in state $s_t$ and transitioning to $s_{t+1}$, the agent receives a realized reward
\[
r_t \sim \mathcal{R}_t(s_t,a_t,s_{t+1}),
\]



\paragraph{\bf Optimality Criteria}
Given a starting state $s_0$, a policy $\pi^*$ is \textit{optimal} in some optimality criterion $O(\cdot)$, if $O(\pi^*(s_0))\geq O(\pi(s_0)), \forall\pi\in \Pi$, where $\Pi$ is the set of all possible policies.
Many readers are familiar with the \emph{discounted rewards} criterion.

\textit{Note:} In the following equations we make use of the law of iterated expectations, namely; that we state $E[R_t]$ instead of $E_\pi[R_t]$ relying on iterated expectation over trajectories and the inner on $R_t$, conditioned on those trajectories.


\begin{equation}
	O_d(\pi(s_0)) := \lim_{T\to\infty} \mathbb{E}\left[ \sum_{t=0}^{T-1} \gamma^t  R_t \right]
	\label{eq:discounted-reward}
\end{equation}




However, a different criterion, often useful in infinite-horizon tasks, is the \emph{average undiscounted rewards} criterion~\cite{Schwartz93,sutton2018reinforcement,puterman1994markov,mahadevan1996average,dewanto20}. It is particularly useful in cycling and continuing tasks, where an optimal policy repeatedly goes through the same states:



\begin{equation}
	O_a(\pi(s_0)) := \lim_{T\to\infty} \mathbb{E}\left[ \frac{1}{T}\sum_{t=0}^{T-1}  R_t\right]
	\label{eq:average-reward}
\end{equation}

\paragraph{\bf Average Rewards Reinforcement Learning (ARL)}
In a seminal paper, \citet{Schwartz93} presented the \rlearning algorithm, the basic tabular model-free reinforcement learning algorithm that optimizes policies for the average reward criterion (\cref{eq:average-reward}). The paper introduces
the notation $\rho^\pi$ to denote the average reward $O_a(\pi(s_0))$ of a policy $\pi$. 
The \rlearning algorithm seeks to optimize $\rho$, by learning an optimal policy. \rlearning is structurally identical to $Q$-Learning (e.g., in terms of selecting actions).




 It then takes two update steps:

First, it updates the $Q$ table entry using \cref{eq:r-update-q}:
\begin{equation}
	Q_{t+1}(s_t,a_t)\gets \alpha\left( r_t - \rho_t + \max_{a'} Q_t(s_{t+1},a') - Q_{t}(s_t,a_t)\right).
	\label{eq:r-update-q}
\end{equation}
Then, and \emph{only if the action $a_t$ was selected on-policy} (i.e., was not an exploratory action), \rlearning also  updates the parameter $\rho$, which maintains the average reward of the learned policy across all states:
\begin{equation}
	\rho_{t+1} \gets \beta\left( r_t + \max_{a'} Q_t(s_{t+1},a') - \max_{a} Q_{t+1}(s_{t},a) - \rho_t\right).
	\label{eq:r-update-rho}
\end{equation}

In essence, the $Q$ table is updated using the \emph{marginal reward} with respect to the average policy reward $\rho$, which is actually the optimization goal of the learning process. The reason for this is to prevent $Q$ from increasing without bound. Instead, $Q$ is bounded by the difference with $\rho$ (see~\cite{Schwartz93,mahadevan1996average,dewanto20,sutton2018reinforcement} for details).

Over the years, there have been investigations of \rlearning variants and novel algorithms for model-based average reward learning~\cite{h-learning}, deep neural-network approaches~\cite{deeprlearning20, apo}, and theoretical investigations. \citet{mahadevan1996average} provides an early synthesis; \citet{dewanto20} provides a recent comprehensive review.

The use of two stochastic approximators in \rlearning, $\alpha$ in \cref{eq:r-update-q} and $\beta$ in \cref{eq:r-update-rho} is a noted weakness of \rlearning~\cite{dewanto20}. Many variants focus on modifying the $\rho$ update steps to improve the practical performance of \rlearning.  However, to the best of our knowledge, all variants---with one exception, below---have assumed that the underlying environment is an MDP.
This allows them to estimate the average reward $\rho$ using the standard arithmetic exponential moving average. 


\subsection{Undiscounted Rewards in Semi-Markov Decision Processes}\label{sec:smdp-smart}

Many real world decision processes unfold over infinite horizons with unknown stopping times, where neither the termination point nor the duration between actions is fixed or known in advance. Such settings are naturally modeled by \emph{semi-Markov decision processes} (SMDPs), which generalize MDPs by introducing stochastic, state-action-dependent holding times (\emph{sojourn} times) between decisions.

An SMDP is defined as a tuple $\langle \mathcal{S},\mathcal{A},\mathcal{T},\mathcal{R}, \Omega \rangle$, where
$\mathcal{S}$, $\mathcal{A}$, $\mathcal{T}$, and $\mathcal{R}$ are defined as above (MDP).
SMDPs relax the unit-time MDP assumption by admitting \emph{stochastic holding times}, captured
by the function 
$\Omega:\mathbb{N}\times\mathcal{S}\times\mathcal{A}\times\mathcal{S}\mapsto \mathcal{P}(\mathbb{R}^+)$.
Analogous to the definition of the distribution $\mathcal{R}_t(s,a,s')$, and the random variable $R_t$, we define the distribution $\Omega_t(s,a,s')$ and random variable $\varUpsilon_t$.

When an action $a\in A$ is taken in state $s$ at time $t\in\mathbb{N}$, the new state $s'$ is stochastically determined by $\mathcal{T}$, and the \textit{sojourn time} $\tau_t$ is sampled from  by $\Omega_t(s,a,s')$, analogous to how the reward $r_t$ is drawn.  

The average reward optimization criterion is a natural fit for SMDPs. Rewards are naturally weighted by the time, and are therefore best understood as a reward rate (lump sum reward, divided by the sojourn time). 
It is then straightforward to consider $\rho$ as the \textit{average reward rate}---reward \emph{per unit time}---of a stationary policy $\pi$ (\cref{eq:rhorate}), and the optimal policy yielding an optimal $\rho$: $\rho^{*}=\sup_{\pi}\rho^\pi$. 


\begin{equation}
    	\rho^\pi\;=\;\lim_{T\to\infty} \mathbb{E}\!\left[\frac{\sum_{t=0}^{T-1} R_t}{\sum_{t=0}^{T-1} \varUpsilon_t}\right].
	\label{eq:rhorate}
\end{equation}
Estimation of the average reward \emph{rate} $\rho$ is the key to model-free average reward reinforcement learning in SMDP settings.

\paragraph{\bf Average Reward RL in SMDPs: \rlearning with Reward Rates}
To the best of our knowledge, only two reinforcement learning algorithms for average reward optimization in SMDPs: SMART and Relaxed-SMART~\cite{r-smart}.

SMART~\cite{smart} builds on \rlearning, modifying it as follows: First, the \rlearning $Q$ update step (\cref{eq:r-update-q}) is modified to compute the marginal update to $Q$ based on the average \emph{rate} $\rho$, rather than the average reward. This is done using $\tau_t$:  
\begin{equation}
	Q_{t+1}(s_t,a_t)\gets \alpha\left( r_t - \rho_t\cdot\tau_t + \max_{a'} Q_t(s_{t+1},a') - Q_{t}(s_t,a_t)\right).
	\label{eq:smart-update-q}
\end{equation}
Essentially, this allows a comparison between the reward received $r_t$ (with holding time $\tau_t$), and the expected reward, given the reward rate $\rho_t$. 

If (and only if) action $a_t$ was selected on-policy, SMART also incrementally updates $\rho$ so as to track the average reward rate of the policy. It does this by tracking the total sum of the rewards received so far $X_t$, and the total sum of the holding times received so far $Y_t$:
\begin{flalign}
	Y_{t+1}\gets Y_t + \tau_t, & \qquad X_{t+1}\gets X_t + r_t, \notag \\
	\rho_{t+1} \gets \frac{X_{t+1}}{Y_{t+1}}.
	\label{eq:smart-update-rho}
\end{flalign}
This calculation uses the long-term sample average rate $\frac{\sum_{i=0}^{t} r_t}{\sum_{i=0}^{t} \tau_t}$ to approximate the expected reward rate $\rho$ in \cref{eq:rhorate}. 

The choice to approximate the average reward rate $\rho$ by the time sample average (\cref{eq:smart-update-rho}) is unusual in the context of reinforcement learning, because it makes a strong assumption as to the stationarity of the rewards and duration.  As long as the rewards and holding times are stationary, the approximation is generally correct.  Otherwise, \cref{eq:smart-update-rho} can be dominated by rare outcomes (outliers), expand dramatically with unbounded rewards (or holding times), or, be affected by latent regime shifts in rewards or holding times.  SMART will then fail to estimate $\rho$ correctly.

The Relaxed-SMART algorithm~\cite{r-smart} attempts to address this strong assumption of stationary rewards and holding times,  by estimating $\rho$ in a different manner, which facilitates
assigning more weight to recent values. 
Rather than using the long-term average (\cref{eq:smart-update-rho}), Relaxed-SMART uses the ratio of average rewards to average durations (\cref{eq:rsmart-update-rho}):
\begin{flalign}
	\bar{\tau}_{t+1}\gets \beta \bar{\tau}_t + (1-\beta) \tau_t, & \qquad \bar{r}_{t+1}\gets \beta \bar{r}_t + (1-\beta) r_t, \notag \\
	\rho_{t+1} \gets \frac{\bar{r}_{t+1}}{\bar{\tau}_{t+1}}.
	\label{eq:rsmart-update-rho}
\end{flalign}

\vspace{-6pt}
\section{\harmonicr}
\label{sec:harmonicr}

\label{sec:harmonicmath}
We develop a novel \rlearning variant that can be used with reward rates, i.e., in SMDP settings.
\cref{sec:old-harmonic-mean} presents the idea of using the harmonic mean as a an alternative basis for estimating the average rate from the reward and holding time generated from each step.  We show that this approach is not reliant on the assumption that rewards and holding times are independent of each other, as previous approaches assume. Then, in~\cref{sec:newharmonic} we
address the limitations of using the standard harmonic mean (e.g., not defined for rates of mixed signs). We introduce a novel alternative, the \hname\ $\harmonic$, and investigate its axiomatic properties.
\cref{sec:harmonic-agent} then shows how it can be used in \rlearning.

\subsection{The Harmonic Mean (Is Not Enough)}
\label{sec:old-harmonic-mean}
We observe that there is a different approach to averaging rates, using the harmonic mean.  The \emph{harmonic} mean (\cref{def:oldharmonic} below) is the correct averaging calculation to be used with rates and flows (and indeed
used in engineering and science in areas focusing on such variables, e.g., electric resistance, liquid flow rates, etc.). 

\begin{definition}[Harmonic mean, $\oldharmonic$]
\label{def:oldharmonic}
Let $X = (x_1,\ldots,x_n)$ be a multi-set of non-zero numbers $x_i\in\mathbb{R}$.
The \emph{harmonic mean} of $X$ is given by:
\begin{equation}
   \oldharmonic(X) \;=\; \oldharmonic(x_1,\dots,x_n)\;=\;\frac{n}{\sum_{k=1}^n \frac{1}{x_k}}.
    \label{eq:harmonicMean}
\end{equation}
\end{definition}

For example, given data about the speeds at which a fixed distance $D$ is traveled---first at 20kph and then again at 40kph---one cannot infer
the average speed is 30kph, as that would be incorrect: Traveling $D$ at 20kph takes twice the amount of time it takes to travel $D$ at 40kph. Instead, the correct average speed is 26.6kph, which is the harmonic mean of 20 and 40.

Based on this observation, we may want to estimate $\rho$ as the incrementally computed harmonic mean of the sampled reward rates $\dfrac{r_1}{\tau_1},\dfrac{r_2}{\tau_2},\ldots$, where each rate $\dfrac{r_t}{\tau_t}$ resulting from a step $t$ is used as single atomic datum ($x_t$ in~\cref{eq:harmonicMean}).

\label{sec:rate-matters}
Calculating $\rho$ without separating the aggregations of the reward and holding time can yield the same value as the ratio of averages in some specific cases. We compare the behavior of aggregating the values using the ratio of the arithmetic means vs. using the harmonic mean.

We begin with an intuition.  Suppose we have taken $n$ steps in the SMDP, resulting in a sequence of rewards $(r_1,r_2,\ldots r_n)$ and an associated sequence of holding times $(\tau_1,\tau_2,\ldots \tau_n)$.  
Intuitively, when the rewards and the holding times are completely independent of each other, then a change in the internal order of the sequence of rewards does not matter when assessing the average reward $\bar{r}$.  But if the rewards and holding times are not independent, then changing the order of reward (e.g., exchanging $r_1$ and $r_n$) necessitates changing the order of the holding times ($\tau_1$ and $\tau_n$), to preserve the coupling between the rewards and their duration.  The ratio of averages ignores this coupling and would produce an incorrect average rate, while the harmonic mean (which averages the rates $\dfrac{r_1}{\tau_1},\ldots \dfrac{r_n}{\tau_n}$ directly) respects this coupling.

\Cref{thm:rate-equivalence} formalizes this intuition. It shows that the harmonic mean and the ratio of arithmetic means will differ under specific conditions.

\begin{theorem}\label{thm:rate-equivalence}
Let $r_t>0$ and $\tau_t>0$ for $t=1,\dots,n$. Define the sample mean
\[
A(x) := \frac{1}{n}\sum_{t=1}^n x_t,
\]
and the (mean-based) covariance

\begin{equation}
\operatorname{Cov}(X,Y) := A(XY) - A(X)\,A(Y).
\label{eq:Cov0}    
\end{equation}

Define
\[
Q := \frac{A(r)}{A(\tau)} = \frac{\sum_{t=1}^n r_t}{\sum_{t=1}^n \tau_t},
\qquad
H := \frac{n}{\sum_{t=1}^n \frac{\tau_t}{r_t}} = \frac{1}{A(\tau/r)}.
\]
Then
\[
Q = H \iff \operatorname{Cov}\!\left(r,\frac{\tau}{r}\right)=0.
\]
Equivalently, $Q \neq H$ iff the covariance is nonzero.
\end{theorem}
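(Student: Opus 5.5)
The plan is a direct algebraic computation. The key observation is that the product $r\cdot\frac{\tau}{r}$ collapses to $\tau$ termwise, which ties the covariance in the statement to the quantities defining $Q$ and $H$.

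First, I will expand the covariance using \cref{eq:Cov0} with $X=r$ and $Y=\tau/r$. Since $r_t\cdot\frac{\tau_t}{r_t}=\tau_t$ for every $t$ (legitimate because $r_t>0$), we have $A\!\left(r\cdot\frac{\tau}{r}\right)=A(\tau)$, and therefore
\[
\operatorname{Cov}\!\left(r,\frac{\tau}{r}\right)=A(\tau)-A(r)\,A\!\left(\frac{\tau}{r}\right).
\]

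Second, I will rewrite $Q$ and $H$ in terms of the same three quantities: $Q=A(r)/A(\tau)$ and $H=1/A(\tau/r)$. All of $A(r)$, $A(\tau)$ and $A(\tau/r)$ are strictly positive by the positivity hypotheses, so every division is well-defined and cross-multiplication preserves equivalence. Thus
\[
Q=H \iff \frac{A(r)}{A(\tau)}=\frac{1}{A(\tau/r)} \iff A(r)\,A\!\left(\frac{\tau}{r}\right)=A(\tau).
\]
By the first step, the last condition is exactly $\operatorname{Cov}(r,\tau/r)=0$. The "equivalently" clause is simply the contrapositive.

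There is no genuine obstacle. The only point needing care is to state explicitly that positivity of $r_t$ and $\tau_t$ makes $A(\tau)$ and $A(\tau/r)$ nonzero, so that cross-multiplying is reversible. As an optional remark, the proof also gives the quantitative identity
\[
\frac{1}{H}-\frac{1}{Q}=-\frac{\operatorname{Cov}(r,\tau/r)}{A(r)},
\]
which shows the sign of the discrepancy between $H$ and $Q$.
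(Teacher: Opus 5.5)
Your proposal is correct and follows essentially the same route as the paper: both use $r_t\cdot(\tau_t/r_t)=\tau_t$ to obtain $A(\tau)=A(r)\,A(\tau/r)+\operatorname{Cov}(r,\tau/r)$, then compare this with $H=1/A(\tau/r)$ and $Q=A(r)/A(\tau)$. Your explicit remark that positivity keeps every denominator nonzero, so that cross-multiplying is reversible, is a small gain in rigor over the paper's sketch, which solves for $H$ directly without noting this.
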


\begin{proof}[Proof Sketch]
Using the definition of covariance,
\[
\operatorname{Cov}(X,Y)=A(XY)-A(X)A(Y)
\quad\Longleftrightarrow\quad \]
\[\
A(XY)=A(X)A(Y)+\operatorname{Cov}(X,Y).
\]
Take $X=r$ and $Y=\tau/r$. For generality, we drop the time $t$ subscript for the purpose of this proof in $r_t$ and since $r_t\neq 0$,
\[
XY = r\cdot(\tau/r)=\tau,
\]
hence
\[
A(\tau)=A(r)\,A(\tau/r)+\operatorname{Cov}\!\left(r,\frac{\tau}{r}\right).
\]
By the definition of $H$,
\[
H=\frac{n}{\sum_{t=1}^n \frac{\tau_t}{r_t}}=\frac{1}{A(\tau/r)}
\quad\Longrightarrow\quad
A(\tau/r)=\frac{1}{H}.
\]
Substituting into the previous identity gives
\[
A(\tau)=\frac{A(r)}{H}+\operatorname{Cov}\!\left(r,\frac{\tau}{r}\right),
\]
so
\[
H=\frac{A(r)}{A(\tau)-\operatorname{Cov}\!\left(r,\frac{\tau}{r}\right)}.
\]
Therefore,
\[
H=\frac{A(r)}{A(\tau)} \iff \operatorname{Cov}\!\left(r,\frac{\tau}{r}\right)=0.
\]
Since $Q=\frac{A(r)}{A(\tau)}$, this is exactly $Q=H \iff \operatorname{Cov}\!\left(r,\frac{\tau}{r}\right)=0$.
\end{proof}

In other words, as long as the relationship between rewards and time induces the Covariance in equation \ref{eq:Cov0} to be \textbf{non zero}, we expect to observe a difference in $\rho$ between SMART and Relaxed-Smart, relative to the $\harmonic$.  Under finite second moments, such non zero covariance between $r$ and $\frac{\tau}{r}$ implies that the variables are not independently distributed. In short, nonzero covariance forces the two mean measures to differ.

In other words, as long as the relation between the reward and time is non-linear, we expect to observe a difference in the $\rho$ value between SMART and Relaxed-SMART in comparison to $\harmonic$.

Unfortunately, the harmonic mean has strict domain restrictions: First, it is not defined for zero reward values (as they have no defined reciprocals).
Second, if values in $X$ have mixed signs, $\oldharmonic$ loses critical properties expected of means. Most glaringly, it loses the \emph{internality} property (value between the minimum and maximum datum, see below) expected with any measure of central tendency: $\oldharmonic(-20, 40)\;=\; -80$, for example.

These restrictions severely limit the applicability of the harmonic mean to the use of reinforcement learning using reward rates in the
model-free \rlearning algorithm, as in many environments, rewards can be zero (though their associated sojourn times are not zero), or negative.
Below, therefore, we propose a new mean operator, \hname, to address these restrictions.

\subsection{The \hname\ $\harmonic$}\label{sec:newharmonic}
We formalize a mixed-sign operator $\harmonic$ that admits zero and mixed-sign datums (\cref{def:newharmonic}).
It is the weighted arithmetic mean of (i) the zero-valued datums, and the harmonic means of (ii) the positive datums, and of (iii) the negative datums.

\begin{definition}[\hname~ $\harmonic$]
	\label{def:newharmonic}
Let $X = (x_1,\ldots,x_n)$ be a multi-set of numbers $x_i\in\mathbb{R}$.
Let $X^+, X^-, X^0$ be partitions of $X$ such that
 $X^+ = (x_i | x_i\in X, x_i > 0)$, $X^- = (x_i | x_i\in X, x_i < 0)$, and $X^0 = (x_i | x_i\in X, x_i = 0)$.
The \hname\ $\harmonic$ is defined as:
\begin{align}
	\harmonic(X) := &  \frac{|X^+|\cdot\oldharmonic(X^+) \;+\; |X^-|\cdot\oldharmonic(X^-) \;+\; |X^0|\cdot 0}{|X^+|+|X^-|+|X^0|} \notag \\
	= & \frac{|X^+|\cdot\oldharmonic(X^+) \;+\; |X^-|\cdot\oldharmonic(X^-)}{|X|},
	\label{eq:harmonic-mixed}
\end{align}
where $\oldharmonic$ is the harmonic mean (\cref{eq:harmonicMean}):

\begin{equation}
    \label{eq:harmonic-pos-neg}
    \oldharmonic(X^+)=\frac{|X^+|}{\sum\limits_{x_i\in X^+} \left( \frac{1}{x_i}\right)},\qquad
    \oldharmonic(X^-)=\frac{|X^-|}{\sum\limits_{x_j\in X^-}\left(\frac{1}{x_j}\right)}.
\end{equation}
\end{definition}

We consider theoretical properties of \hname.
There are several axiomatic definitions of means, or general \emph{mean-type aggregation functions}~\cite{bullen2003handbook,beliakov2007aggregation,singpurwalla20,gray2013meanaxiomaticsgeneralizationsapplications}, which we refer to as \emph{general means} for brevity.
We will show that the
\hname\ is a general mean, but not a quasi-arithmetic mean (the class that includes the familiar arithmetic, geometric, harmonic means, and many others).

%
%
%
%

\begin{definition}[General Mean]\label{def:general-mean}
Following~\cite{bullen2003handbook,aczel48}, an operator $\mathcal{M}$ is a \textit{general mean} if it satisfies
the following properties, with respect to all  multisets $X$:
\begin{description}
	\item[Internality.] $\min_i x_i \leq \mathcal{M}(x_1,\ldots,x_n) \leq \max_i x_i$.
	\item[Idempotence (reflexivity).] $\mathcal{M}(x,\ldots,x)=x$ for any $x$.
	\item[Symmetry (permutation invariance).] $\mathcal{M}$ depends only on the values in the finite multiset $X$, but not their order: for any permutation $\sigma$ of $(1,\dots,n)$,
	\[
	\mathcal{M}(a_1,\dots,a_n)=\mathcal{M}\big(a_{\sigma(1)},\dots,a_{\sigma(n)}\big).
	\]
	\item[Monotonicity.] If one datum value increases (others fixed), the mean should not decrease. Given $Y=(x_1,\ldots,x_i+k,\ldots,x_n)$, where $k>0$, $\mathcal{M}(Y)\geq \mathcal{M}(X)$.
\end{description}
\end{definition}

\noindent An important and familiar subclass of means is \emph{quasi-arithmetic means}, which obeys the above definition, but in addition, has one more property (\cref{def:mean-replacement}), where by any subgroup of datums $Y\in X$ may be replaced by their subgroup mean $\mathcal{M}(Y)$, without affecting the $\mathcal{M}(X)$~\cite[][as presented in~\cite{bullen2003handbook,aczel48}]{kolmogorov1930mean,nagumo1930klasse}.

\begin{definition}[$n$-Associativity (replacement invariance, consistency)]
	\label{def:mean-replacement}
	A mean operator $\mathcal{M}$ is \textit{consistent} if for any multiset $X=(x_1,\ldots,x_n)$, and any \emph{block} $B=(x_i,\ldots,x_j)\subset X$ (for any $1\leq i,j \leq n$), the values $x_i,\ldots,x_j$ can be replaced by $|B|$ copies of the block mean $\mathcal{M}(B)$, without changing $\mathcal{M}(X)$:
$$
\mathcal{M}(x_1,\ldots x_i,\ldots x_j,\ldots,x_n)=\mathcal{M}(x_1,\ldots\mathcal{M}(B),\ldots\mathcal{M}(B),\ldots x_n).
$$
\end{definition}
%

\noindent The familiar arithmetic mean is a quasi-arithmetic mean. 
The harmonic mean $\oldharmonic$ is a quasi-arithmetic mean under the domain restriction that either $\forall x\in X, x>0$ or $\forall x\in X, x<0$.
If the restriction is removed, the harmonic mean loses the internality property at the very least (for mixed signs), or becomes undefined (for zero values).

We show that where the domain restrictions for the harmonic mean are satisfied, $\harmonic$ generalizes $\oldharmonic$ (\cref{thm:generality}). We then use this result to show that $\harmonic$ is a general mean (\cref{thm:mean}), but not a quasi-arithmetic mean (\cref{thm:noconsistency}). For brevity, we present proof sketches only.

\begin{theorem}[$\harmonic$ generalizes $\oldharmonic$]\label{thm:generality}
	Given a multiset of positive (respectively, negative) values $X^+$ ($X^-$), $\harmonic(X^+)=\oldharmonic(X^+)$ ($\harmonic(X^-)=\oldharmonic(X^-)$.
\end{theorem}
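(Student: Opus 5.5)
The plan is to unfold \cref{def:newharmonic} for a multiset whose entries all share one strict sign, and observe that two of the three summands in the numerator of \cref{eq:harmonic-mixed} disappear. I will treat the positive case; the negative case follows by the same argument with the roles of $X^+$ and $X^-$ exchanged.

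Let $X=(x_1,\ldots,x_n)$ with every $x_i>0$. The first step is to identify the partition in \cref{def:newharmonic}. Since every entry is strictly positive, we have $X^+=X$, $X^-=\emptyset$ and $X^0=\emptyset$. It follows that $|X^+|=|X|=n$ and $|X^-|=|X^0|=0$.

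The second step, and the only delicate one, is the treatment of the empty partitions. As written, $\oldharmonic(X^-)$ in \cref{eq:harmonic-pos-neg} would be $0/0$ when $X^-$ is empty. I will read \cref{eq:harmonic-mixed} with the convention that a term whose weight $|X^-|$ (or $|X^0|$) is zero contributes nothing, i.e.\ $0\cdot\oldharmonic(\emptyset):=0$. This is evidently the intended reading, since the weighted sum ranges only over the nonempty parts. I will state this convention explicitly at the beginning of the proof, because the statement is only meaningful under it.

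With that convention in place, \cref{eq:harmonic-mixed} reduces to
\[
\harmonic(X)=\frac{n\cdot\oldharmonic(X^+)}{n}=\oldharmonic(X^+)=\oldharmonic(X),
\]
and the last equality holds because $X^+=X$. Substituting \cref{eq:harmonic-pos-neg} confirms that this is exactly $n/\sum_i x_i^{-1}$, which is \cref{eq:harmonicMean}.

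For an all-negative multiset, the partition is instead $X^-=X$ with $X^+=X^0=\emptyset$, and the same cancellation gives $\harmonic(X^-)=\oldharmonic(X^-)$.

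I expect no real obstacle. The only point that needs care is the empty-set convention, which keeps \cref{eq:harmonic-mixed} well defined.
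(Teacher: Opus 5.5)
Your proposal is correct and follows the paper's proof essentially verbatim: take $X^+=X$ and $X^-=X^0=\emptyset$, then cancel $|X|$ in \cref{eq:harmonic-mixed}. The only difference is that you state the convention $0\cdot\oldharmonic(\emptyset)=0$ explicitly, whereas the paper's sketch uses it silently when it writes the term $0\cdot\oldharmonic(\emptyset)$.
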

\begin{proof}[Proof sketch]
	Assume $X=(x_1,\ldots,x_n)$, where $\forall i, x_i>0.$
	Then by~\cref{def:newharmonic},
	\begin{align}
		\harmonic(X) = & \frac{|X^+|\cdot\oldharmonic(X^+) \;+\; |X^-|\cdot\oldharmonic(X^-)}{|X|} \tag{by \cref{eq:harmonic-mixed}.} \\
		= & \frac{|X|\cdot\oldharmonic(X)+0\cdot\oldharmonic(\emptyset)}{|X|} \tag{since $X^+=X.$} & \\
		= & \oldharmonic(X). \notag
	\end{align}
	The same argument holds when $\forall i, x_i<0.$
\end{proof}

\noindent
\cref{thm:mean} shows that unlike $\oldharmonic$, $\harmonic$ is a general mean across all values ($\forall i, x_i\in\mathbb{R}$):

\begin{theorem}\label{thm:mean}
 $\harmonic$ is a general mean.
\end{theorem}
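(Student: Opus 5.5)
The plan is to verify the four axioms of \cref{def:general-mean} one at a time. Throughout I use the decomposition of \cref{def:newharmonic}: $\harmonic(X)$ is the weighted arithmetic mean of three block values, namely $\oldharmonic(X^+)$, $\oldharmonic(X^-)$ and $0$, with weights $|X^+|/|X|$, $|X^-|/|X|$ and $|X^0|/|X|$. I also use \cref{thm:generality}, which says that on a single sign class $\harmonic$ coincides with $\oldharmonic$. Recall that $\oldharmonic$ is a quasi-arithmetic mean on $(0,\infty)$ and on $(-\infty,0)$.

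\textbf{Symmetry.} The partition $X^+,X^-,X^0$ and the sums $\sum 1/x_i$ are unchanged by any permutation, so $\harmonic$ is permutation invariant.

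\textbf{Idempotence.} If $x\neq 0$, all copies of $x$ fall into one sign class, and \cref{thm:generality} together with idempotence of $\oldharmonic$ gives $\harmonic(x,\ldots,x)=x$. If $x=0$, every datum lies in $X^0$, so the numerator of \cref{eq:harmonic-mixed} vanishes and the value is $0$.

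\textbf{Internality.} I use the internality of $\oldharmonic$ on each nonempty sign class. This gives $\min X^+\le\oldharmonic(X^+)\le\max X^+$, and likewise for $X^-$. The zero block trivially sits in $[0,0]$. So each of the three block values lies in $[\min_i x_i,\max_i x_i]$. A convex combination of numbers in an interval stays in that interval, which gives internality of $\harmonic$.

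\textbf{Monotonicity.} This is the main step, and I would split it into two kinds of move. The first kind is an increase $x_i\mapsto x_i+k$ that keeps $x_i$ in the same sign class. Then the weights are unchanged, and only one block value moves. That block value is monotone by monotonicity of $\oldharmonic$ on a fixed-sign domain, since $1/x$ is decreasing on each half-line. Hence $\harmonic$ does not decrease. The second kind is an increase that moves $x_i$ across classes ($-\to 0$, $0\to +$, or $-\to +$). Here I would write $\harmonic(X)$ as $\frac{1}{|X|}\bigl(\sum_{\pm}|X^\pm|^2/\sum_{X^\pm}x^{-1}\bigr)$. I would then compare the contribution of the block that loses $x_i$ with the block that gains it. This reduces the claim to an explicit inequality between the old and new block sums.

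I expect this transfer step to be the real obstacle. I would first test it on the smallest case, $X=(-1,x)$ with $x\uparrow 0^-$. In that case $\harmonic(X)=2/(-1+1/x)\to 0^-$, while at $x=0$ one gets $\harmonic(-1,0)=-1/2$. So the inequality appears to fail: removing a tiny negative datum from $X^-$ pulls $\oldharmonic(X^-)$ away from $0$. If that check holds up, the monotonicity axiom can only be established for increases within a sign class (or from $0$ upward when $X^-=\emptyset$). The theorem would then need that qualification rather than holding verbatim. The other three axioms go through as above without change.
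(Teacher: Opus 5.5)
Your check holds up, and it settles the matter. The statement is false as written, and your example breaks exactly the step the paper's sketch relies on. Concretely, $\harmonic(-1,-0.01)=2/(-1-100)=-2/101\approx-0.02$, whereas raising the second datum by $k=0.01$ gives $\harmonic(-1,0)=-1/2$.

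The paper handles the case $x_i+k=0$ by asserting that deleting $x_i$ from $X^-$ makes $\oldharmonic(X^-)$ increase, and hence $\harmonic(X)$ increase. Both halves are wrong:
\begin{enumerate}
\item The harmonic mean of negative data is pulled toward the datum nearest $0$. So when $x_i$ is that datum, deleting it \emph{lowers} $\oldharmonic(X^-)$.
\item The weight $|X^-|$ shrinks at the same time. The quantity that actually moves is $|X^-|^2/\sum_{X^-}x^{-1}$, as in your reformulation.
\end{enumerate}
The sketch also never treats the moves $0\to+$ and $-\to+$.

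Your arguments for symmetry, idempotence and internality agree with the paper's. Your convex-combination version of internality is the more careful one. The paper's phrase ``between the harmonic mean of the positives and that of the negatives'' is undefined when a sign class is empty, and it ignores the zero block.

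One correction to your closing qualification: the safe case for an upward move out of $0$ is $X^+=\emptyset$, not $X^-=\emptyset$. The move $0\to+$ has the mirror-image defect. For example, $\harmonic(0,1)=1/2$ but $\harmonic(\epsilon,1)=2\epsilon/(1+\epsilon)\to0$, with no negative data at all.

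Every case can be read off at once. Let $Y$ be the data other than $x_i$. Put $P=|Y^+|^2/\sum_{Y^+}x^{-1}\ge0$ and $N=|Y^-|^2/\sum_{Y^-}x^{-1}\le0$, each zero for an empty class, and vary $x_i$:
\begin{itemize}
\item As $x_i\to0^-$, the block containing $x_i$ contributes $\to0$, so $\harmonic(X)\to P/|X|$.
\item At $x_i=0$ the value is $(P+N)/|X|$.
\item As $x_i\to0^+$, the value tends to $N/|X|$.
\end{itemize}
So $\harmonic$ drops on arriving at $0$ whenever another negative datum exists. It drops again on leaving $0$ whenever another positive datum exists. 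The move $-\to+$ can fail too: $\harmonic(-1,-\delta,1)\approx1/3$ but $\harmonic(-1,\epsilon,1)\approx-1/3$ for small $\delta,\epsilon>0$.

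What holds in general is monotonicity within a sign class. It also holds for sign-changing moves in which $x_i$ shares its nonzero sign with no other datum, either before or after the move. Hence $\harmonic$ is not a general mean under the paper's definition. The proof cannot be repaired without weakening the monotonicity axiom or changing $\harmonic$.
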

\begin{proof}[Proof sketch]
We show $\harmonic$ satisfies the properties listed in \cref{def:general-mean}: internality, idempotence, permutation invariance, and monotonicity.
\paragraph{Internality (satisfied)}
				$\harmonic$ is bounded between the harmonic mean of the positives and that of the negatives. Each of these, by itself, maintains internality. So $\harmonic$ always lies between the dataset minimum and maximum.
\paragraph{Idempotence (satisfied)} If $x=0$, $\harmonic(x,\ldots,x)=0=x$. If $x>0$, or $x<0$, then $\harmonic=\oldharmonic$ (see \cref{thm:generality}) and thus maintains idempotence (this can also be proven directly). 

\paragraph{Symmetry (satisfied)} $\harmonic(X)$ is unchanged by reordering datums in $X$, as the partitioning of $X$ into the multisets $X^+, X^-, X^0$ is not dependent on the ordering.

\paragraph{Monotonicity (satisfied)} Note the denominator $|X|$ does not change. Pick any $x_i$ in $X^+$ (respectively, $X^-$). Add a constant $k>0$. If $x_i+k$ remained in $X^+$ ($X^-$), then by the monotonicity of $\oldharmonic$, $\oldharmonic(X^+)$ (or $\oldharmonic(X^-)$) will increase and thus $\harmonic(X)$ will increase. If $x_i+k=0$, then $x_i$ is eliminated from $X^-$. 
The negative-valued $\oldharmonic(X^-)$ will increase, and thus $\harmonic(X)$ will increase.

\vspace{6pt}\noindent
{\bf Conclusion.} As $\harmonic$ satisfies \cref{def:general-mean}, it is a general mean.
\end{proof}

The \hname\ $\harmonic$ generalizes the harmonic mean $\oldharmonic$ where the latter is defined, but $\harmonic$ is defined where $\oldharmonic$ is not (for zero-value datums). Also, unlike $\oldharmonic$, $\harmonic$ maintains the centrality property when datums of mixed signs are present.  It is therefore a general mean. However, it is not a quasi-arithmetic mean (\cref{thm:noconsistency}).

\begin{theorem}\label{thm:noconsistency}
 $\harmonic$ is not a quasi-arithmetic mean.
\end{theorem}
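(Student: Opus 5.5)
The plan is to show that $\harmonic$ violates $n$-associativity (\cref{def:mean-replacement}). The paper treats that property as the extra condition separating quasi-arithmetic means from general means (Kolmogorov--Nagumo), so one explicit counterexample suffices. I would build it from a mixed-sign multiset, since on same-sign data $\harmonic=\oldharmonic$ by \cref{thm:generality}, and $\oldharmonic$ is quasi-arithmetic there. The mechanism to exploit is this: replacing a mixed-sign block by copies of its mean moves every element of the block into a single sign class. That changes the counts $|X^+|$ and $|X^-|$ that weight the two harmonic components in \cref{eq:harmonic-mixed}.

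Concrete attempt: take $X=(-1,1,1)$ with block $B=(-1,1)$. The positive part is $\{1\}$ with harmonic mean $1$, and the negative part is $\{-1\}$ with harmonic mean $-1$. Hence $\harmonic(B)=(1\cdot 1+1\cdot(-1))/2=0$. The full multiset gives $\harmonic(X)=(2\cdot 1+1\cdot(-1))/3=1/3$. After replacement we get $X'=(0,0,1)$, and $\harmonic(X')=(1\cdot 1)/3=1/3$. This gives equality, so the zero case is too symmetric.

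I would therefore switch to a block whose mean is nonzero. Take $X=(-1,2,2)$ with $B=(-1,2)$. Then $\harmonic(B)=(2-1)/2=1/2$. For the full multiset, $\oldharmonic(2,2)=2$, so $\harmonic(X)=(2\cdot2-1)/3=1$. After replacement, $X'=(1/2,1/2,2)$ is all positive, so $\harmonic(X')=\oldharmonic(1/2,1/2,2)=3/(2+2+1/2)=2/3\neq1$. This is the counterexample.

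To make the argument robust, I would also note that a failure occurs even within the positive/zero domain. Take $X=(0,1,1)$ with $B=(0,1)$. Then $\harmonic(B)=1/2$, so $X'=(1/2,1/2,1)$ and $\harmonic(X')=3/5$, while $\harmonic(X)=2/3$. This shows the obstruction comes from the arithmetic weighting of zeros, not only from sign mixing.

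The main obstacle is not computational but definitional. I must make sure that ``quasi-arithmetic'' is identified with the presence of property \cref{def:mean-replacement} together with \cref{def:general-mean}, as the paper states, so that violating consistency alone suffices. If a reader prefers the representation definition $\mathcal{M}(X)=f^{-1}\bigl(\frac1n\sum f(x_i)\bigr)$ for a continuous strictly monotone $f$, I would add one line. Every such mean satisfies replacement invariance, since replacing $B$ by copies of $f^{-1}(A(f(B)))$ preserves $\sum f(x_i)$. So the same counterexample rules out any generator $f$.
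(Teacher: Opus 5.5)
Your proposal is correct and takes essentially the same approach as the paper. Both refute $n$-associativity (\cref{def:mean-replacement}) with an explicit counterexample: the paper uses $X=(1,1,-1,-4)$ with the zero-mean block $B=(1,-1)$ (values $-0.3$ versus $-0.75$), and you use $X=(-1,2,2)$ with $B=(-1,2)$ (values $1$ versus $2/3$), which checks out. Your extra line that every mean of the form $f^{-1}\bigl(\frac{1}{n}\sum f(x_i)\bigr)$ is replacement-invariant makes explicit a step the paper only asserts, and your failed first attempt was an artifact of that symmetric instance rather than an obstruction for zero-mean blocks, as the paper's example shows.
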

\begin{proof}[Proof sketch]
We note that consistency is maintained within pure-sign blocks, as a function of the consistency of $\oldharmonic$ with respect to pure-sign blocks.
However, if a block $B$ has mixed signs, consistency is not guaranteed.
Assume for contradiction that $\harmonic$ is a quasi-arithmetic mean. It is therefore consistent (per \cref{def:mean-replacement}).
Let $X=(1,1,-1,4)$, and $B=(1,-1)\subset X$. Then,
\[
\harmonic(1,1,-1,-4) = -0.3, \qquad \harmonic(1,-1) = 0.
\]
However, replacing $B$ by $\harmonic(B)$ yields $\harmonic(1,0,0,-4) = -0.75$.
This contradicts the assumption that $\harmonic$ is consistent, and hence it cannot be a quasi-arithmetic mean.
\end{proof}





Finally, it remains to show that $\harmonic$ maintains the property discussed in \cref{thm:rate-equivalence}, which distinguishes it from the ratio of averages. We show in \cref{thm:3collections-dependent} that if the distribution of all rewards is not independent of all times, then when rewards are decomposed into their positive, negative, and zero values, at least one of three reward-collections must also not be independent of time, hence maintaining the distinction. 

\begin{theorem}
If the distributions of all rewards are \textbf{not independent} of the distribution of all holding times, then when rewards are partitioned based on their sign (positive, negative, zero), at least one of three reward partition collections must also be dependent on the respective partition of holding times.
\label{thm:3collections-dependent}
\end{theorem}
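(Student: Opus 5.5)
We argue by contraposition: if the rewards restricted to each sign class are independent of the corresponding holding times, then the rewards are independent of the holding times overall. To make this precise, let $(R,\varUpsilon)$ be a generic (reward, holding time) pair, and let $S=\operatorname{sgn}(R)\in\{+,-,0\}$ be the sign label. The sign classes $\{S=+\}$, $\{S=-\}$ and $\{S=0\}$ partition the sample space. We read ``the reward partition collection $s$ is independent of the respective holding times'' as conditional independence of $R$ and $\varUpsilon$ given $S=s$, for every class with $P(S=s)>0$.

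For the contrapositive to give the stated conclusion, the weighted mixture of sign classes must not itself create dependence. The step $S\mapsto \varUpsilon$ is where the real content lies. In general, conditional independence within every class does \emph{not} imply independence of the mixture. The dependence can leak through the label, as when positive rewards come with long holding times and negative rewards with short ones. So the plan has two parts.

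\textbf{Step 1 (law of total probability).} For Borel sets $A,B$, write
\[
P(R\in A,\varUpsilon\in B)=\sum_{s}P(S=s)\,P(R\in A\mid S=s)\,P(\varUpsilon\in B\mid S=s),
\]
using conditional independence within each class.

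\textbf{Step 2 (collapse the sum).} Compare this with $P(R\in A)P(\varUpsilon\in B)=\sum_s P(S=s)P(R\in A\mid s)\cdot\sum_{s'}P(S=s')P(\varUpsilon\in B\mid s')$. The two agree exactly when the covariance, over the label $S$, of $s\mapsto P(R\in A\mid s)$ and $s\mapsto P(\varUpsilon\in B\mid s)$ vanishes for all $A,B$. Since $R$ determines $S$, this reduces to the following: \emph{independence holds iff each class is internally independent and the law of $\varUpsilon$ does not depend on the sign label.} Consequently, if $R$ and $\varUpsilon$ are dependent, then either some sign class carries internal dependence, which is the theorem's conclusion directly, or the holding-time law differs across sign classes. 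In the latter case the ``respective partition of holding times'' is itself distributed differently depending on which reward partition it accompanies. Interpreting the collection as the pair (label-restricted rewards, label-restricted times) over the whole population, that collection is dependent as well.

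Matching the paper's informal statement is the main obstacle: we must fix an interpretation of ``partition collection'' under which the second case counts. The honest formulation is that dependence survives either inside a class or across class labels. I would state this refined version explicitly and note that, in the setting of \cref{thm:rate-equivalence}, the second case still produces a nonzero covariance in the mixed aggregate. That keeps the distinction from the ratio of averages which \cref{def:newharmonic} is meant to preserve.
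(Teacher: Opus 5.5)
Your argument is correct for the direction you need, but it is organized differently from the paper's. The paper never splits the dependence of $r$ and $\tau$. It takes as its formal hypothesis that the sign label $Z$ is dependent on $T$, and reads ``collection $\kappa$ is dependent on time'' as ``the event $A_\kappa=\{Z=\kappa\}$ is not independent of $\sigma(T)$''. It then argues by contradiction using finite additivity: if each atom $A_\kappa$ is independent of $\sigma(T)$, so is every union of atoms, i.e.\ all of $\sigma(Z)$.

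Your total-probability computation instead shows that dependence of $R$ and $\varUpsilon$ forces one of two things: conditional dependence inside some sign class, or a law of $\varUpsilon$ that varies with the label. The second alternative is exactly the paper's hypothesis $Z\not\perp\!\!\!\perp T$, and the paper's atom argument is what localizes it to one class.

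Your route buys coverage of the hypothesis as the theorem actually states it. The hypothesis $r\not\perp\!\!\!\perp\tau$ does not imply $Z\not\perp\!\!\!\perp T$: take $r=\tau$ with $\tau$ nondegenerate. Then $Z$ is constant, every $A_\kappa$ is trivial, and the paper's formal conclusion fails, whereas your first alternative catches this case. The paper's route buys a single fixed notion of a dependent collection and a two-line proof with no conditioning on classes. Yours is necessarily disjunctive, since, as you observe, the purely conditional reading makes the statement false.

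Two tightenings. First, in the second case, drop the vague ``pair over the whole population'' reading. $\mathbb{P}(\varUpsilon\in B)$ is the $\mathbb{P}(S=s)$-weighted average of the values $\mathbb{P}(\varUpsilon\in B\mid S=s)$. Non-constancy therefore yields some $s$ with $\mathbb{P}(S=s)>0$ and $\mathbb{P}(\{S=s\}\cap\{\varUpsilon\in B\})\neq\mathbb{P}(S=s)\,\mathbb{P}(\varUpsilon\in B)$, which is verbatim the paper's conclusion. Second, drop the closing remark about \cref{thm:rate-equivalence}. That theorem assumes $r_t>0$, where the label is constant and your second case cannot arise. In any case, dependence does not force a particular covariance such as $\operatorname{Cov}(r,\tau/r)$ to be nonzero.
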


\begin{proof}
Skipped due to space limitations. See Appendix~\ref{sec:full-proof}.
\end{proof}

\subsection{Using $\harmonic$ in Average Reward RL}
\label{sec:harmonic-agent}
We introduce a novel average reward RL algorithm, called \harmonicr \footnote{\hyperlink{https://github.com/erelon/SMDP_Agents/tree/master}{https://github.com/erelon/SMDP\_Agents}}.
The $Q$ update rule for \harmonicr is the same as for SMART ($\rho$ is multiplied by $\tau_t$, \cref{eq:smart-update-q}).
We develop an incremental approximator of $\harmonic$, to be used in updating the average reward rate $\rho$.




We use $p$ (respectively, $n$) to maintain the exponential moving \emph{arithmetic} means of the positive rewards (negative rewards, resp.). Analogously, we use $w_p,w_n,w_z$ to maintain the exponential moving \emph{arithmetic} approximation of $|H^+|, |H^-|, |H^0|$, resp.  We denote by $E^+, E^-$ the exponentially moving approximations of the harmonic means $\oldharmonic(H^+), \oldharmonic(H^-)$, resp.

We set $r$, the reciprocal of the rate $\dfrac{r_t}{\tau_t}$ (when $r_t \neq 0$)


$$r:=\begin{cases}
 0, & r_t=0, \\
 \dfrac{\tau_t}{r_t}, & \text{otherwise}.
\end{cases}
$$
Then, we set the reciprocals $p, n$ and the associated weights $w_p,w_n$, using the indicator function $\mathbf{1}$ (1 when the condition holds, 0 otherwise). We also set $w_z$:
\begin{flalign}
	p\gets \beta\cdot(\mathbf{1}\{r>0\}\cdot r - p), & \qquad w_p\gets \beta\cdot(\mathbf{1}\{r>0\} - w_p), \\
	n\gets \beta\cdot(\mathbf{1}\{r<0\}\cdot r - n), & \qquad w_n\gets \beta\cdot(\mathbf{1}\{r<0\} - w_n), \\
	& \qquad  w_z\gets \beta\cdot(\mathbf{1}\{r=0\} - w_z).
\end{flalign}
Now we can compute $E^+, E^-$ by inverting the reciprocals $n,p$ (accounting for the weight $w_p,w_n$)
\begin{flalign}
	E^+ \gets \begin{cases}
		0, & p=0, \\
		\dfrac{w_p}{p}, & \text{otherwise}.
	\end{cases}, & \qquad E^- \gets \begin{cases}
	0, & n=0, \\
	\dfrac{w_n}{n}, & \text{otherwise}.
	\end{cases}
\end{flalign}

Finally, the exponentially moving $\harmonic$ of $\rho$ is given by
\begin{equation}
	\rho_{t+1} \gets \frac{w_p\cdot E^+ + w_n\cdot E^-}{w_p+w_n+w_z}.
	\label{eq:harmonic-update-rho}
\end{equation}

The \harmonicr algorithm is an \rlearning variant, where $\rho$ is a stochastically-estimated average reward rate.  In that, it is completely analogous to \rlearning, but where the latter is appropriate only for MDP environments, \harmonicr is intended for use in the more general SMDP case. Note that by setting $\tau_t=1$, one gets back the familiar $\rho$ computation via the arithmetic mean (\cref{eq:r-update-rho}).
It is therefore a proper generalization of \rlearning.

\vspace{-6pt}
\section{Empirical Evaluation}\label{sec:eval}
In addition to the theoretical analysis of \hname, we also investigate it empirically in extensive experiments, using deceptively simple---and very challenging---SMDPs (\cref{sec:sim}), and using real-world bitcoin data (\cref{sec:bitcoin}). 

\subsection{A Family of Deceptively Simple SMDP}
\label{sec:sim}
We use simulated two-state SMDPs to empirically evaluate the difference between SMART and Relaxed SMART algorithms, and \harmonicr. 

\subsubsection{Simulation Setup}
\label{sec:sim-setup}
\cref{fig:state-machine} shows the small, continuing two-state SMDP environment that we use to evaluate the algorithms. The environment is constructed from states $s_1,s_2$ and actions $A,B$. From $s_1$, action $A$ deterministically transitions to $s_2$ with a reward $r_t$ and duration (sojourn time) $\tau_t$. Action $B$ behaves similarly but uses a different $(r_t,\tau_t)$ generator.  From $s_2$, the process deterministically returns to $s_1$ with reward $0$ and duration $1$, yielding a continuing task (no terminal states).
Each episode starts at $s_1$, and the reward/duration generators are reset at episode boundaries to ensure identical sampling across runs. 



\begin{figure}[t]
\centering
\resizebox{0.82\linewidth}{!}{%
\begin{tikzpicture}[
  >=Latex,
  node/.style={circle, draw=black, fill=gray!20, minimum size=16mm, inner sep=0pt, font=\bfseries},
  lab/.style={font=\scriptsize},
  labA/.style={lab, text=orange!70!black},
  thickarrow/.style={->, line width=0.8pt}
]

\node[node] (s1) {S1};
\node[node, right=34mm of s1] (s2) {S2};

\draw[thickarrow]
  (s1.north east) to[out=18, in=162]
  node[pos=0.52, above=0.3mm, labA] {Action A}
  node[pos=0.55, below=0.2mm, lab, sloped] {$r\sim D_{r_1},\ \tau\sim D_{\tau_1}$}
  (s2.north west);

\draw[thickarrow]
  (s2.west) -- (s1.east)
  node[pos=0.50, above=0.0mm, lab] {$r=0$}
  node[pos=0.50, below=0.0mm, lab] {$\tau=1$};

\draw[thickarrow]
  (s1.south east) to[out=-18, in=-162]
  node[pos=0.52, below=0.3mm, labA] {Action B}
  node[pos=0.55, above=0.2mm, lab, sloped] {$r\sim D_{r_2},\ \tau\sim D_{\tau_2}$}
  (s2.south west);

\end{tikzpicture}
}

\caption{Two-state SMDP with stochastic reward and sojourn time for the transition $s1 \to s2$, and deterministic return for the transition $s2 \to s1$.}
\Description{Two-state SMDP with stochastic reward and sojourn time scheme}
\label{fig:state-machine}
\end{figure}

As shown in \cref{sec:rate-matters}, the algorithms' different approaches to estimating $\rho$ lead to identical results when the rewards and durations are independent of each other. 
We therefore focus on the case where the reward and durations are not independent. Furthermore, had the rate itself been stationary, then SMART would
already be sufficient, and thus we choose a non-stationary rate.

Specifically, we set up the rewards and durations such that action A may appear promising within the episode length, though its rewards and durations are independent. Action B may appear less promising initially, but its rewards and durations are not independent.  The objective of the algorithms is to discover that the actual expected rate of Action B is much better in the long term.

In the experiments, action A's rewards are taken from a function linear in $t$, with a slope of 0.05.  The durations of action A are stochastically sampled from the normal distribution $\mathcal{N}(\mu=1,\sigma=0.1)$, capped at 0.001 to prevent zero or negative durations.  The rewards and durations of action $A$ are clearly independent (in the long run).

Action B's rewards use a \textit{drifting log-scaled sine} (denoted \textit{SinLogD})---a drifting sine wave that scales and amplifies stronger as time passes (\cref{eq:sinlogd}). The durations use a \textit{drifting log-scaled cosine} (denoted \textit{CosLogD) (\cref{eq:coslogd}}):
\begin{align}
    SinLogD(t,\mathrm{offset},\mathrm{log\_scale}) =& \bigl(\sin(t)+\mathrm{offset}\bigr)\,10^{\,t\cdot \mathrm{log\_scale}}. \label{eq:sinlogd} \\ 
    CosLogD(t,\mathrm{offset},\mathrm{log\_scale}) =& \bigl(\cos(t)+\mathrm{offset}\bigr)\,10^{\,t\cdot \mathrm{log\_scale}}. \label{eq:coslogd}
\end{align}
  
We ensure the rewards and durations of action B are not independent by using a hyperparameter to set the $\mathrm{log\_scale}$ in the rewards and durations. Given a baseline value $l$, the reward function uses $\mathrm{log\_scale}=v$, and the duration function uses $\mathrm{log\_scale}=\dfrac{v}{2}$. 
For simplicity, the value of the offset throughout the experiment is set to 10 for both functions. While offset of zero still works, the addition of fast-iterating positive and negative values is hard to grasp visually, hence we keep the values positive only using the offset.

\cref{fig:SinLogD} shows the first thousand steps of each of the reward ($SinLogD$) and duration ($CosLogD$) functions and their ratio: 
$\frac{\operatorname{SinLogD}(t,10,0.001)}{\operatorname{CosLogD)}t,10,0.0005)}.$ The figure illustrates how quickly the non-stationary rewards and durations expand, while their non-stationary ratio expands much more slowly.

\begin{figure}[ht]
    \centering
    \includegraphics[width=0.75\linewidth]{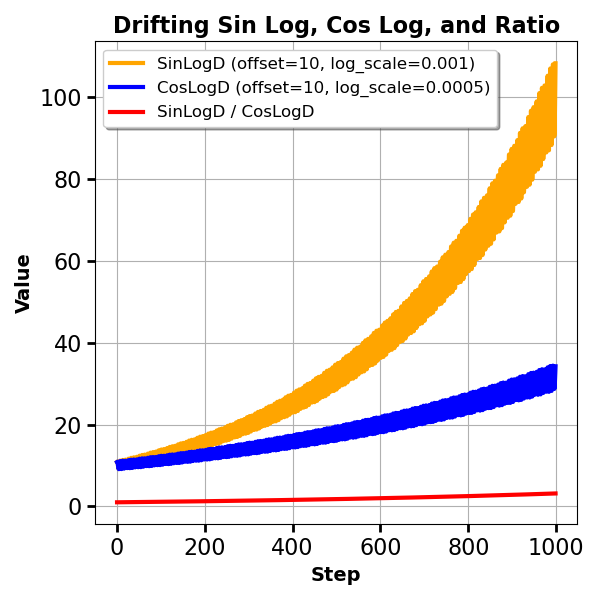}
    \caption{A thousand-step sample of $\operatorname{SinLogD}(t,10,0.001)$ and $\operatorname{CosLogD}(t,10,0.0005)$ with their ratio.}
    \Description{Line plots showing the first thousand steps of the drifting log-scaled sine (SinLogD) and cosine (CosLogD) functions and their ratio. The functions expand rapidly while their ratio grows much more slowly.}
    \label{fig:SinLogD}
\end{figure}

\cref{fig:ratio_at_10000} shows a specific setting, where action B's rewards and durations are defined as in \cref{fig:SinLogD}. The learning algorithms would be exposed to the first 1000 steps (shown as a dotted vertical line). Up to that point, action A seems superior. However, plotting ahead to 10,000 steps, we see that the rate resulting from action B overtakes that of action A. Action B is the optimal choice.  These settings are a good test for a learning algorithm's ability to uncover the underlying trend of the non-stationary average rate.


\begin{figure}[htbp]
    \centering
    \includegraphics[width=0.7\linewidth]{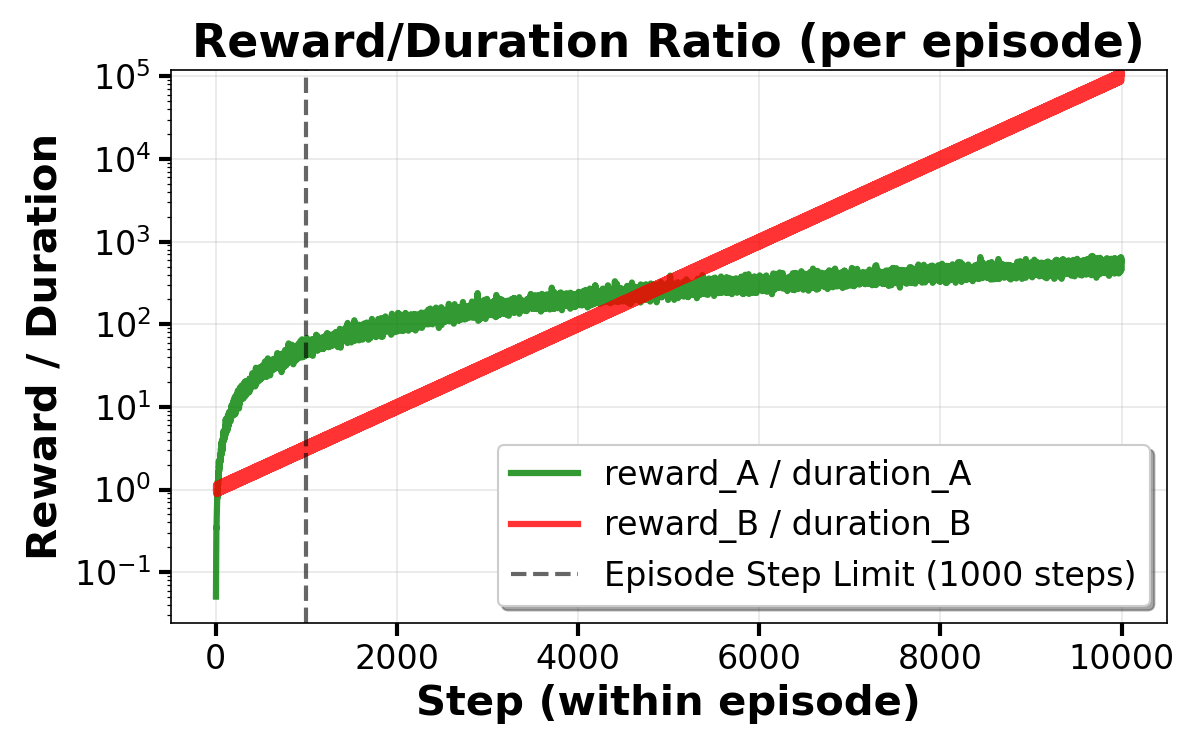}
    \caption{Ratio of $\operatorname{SinLogD}(t,10,0.001) / \operatorname{CosLogD}(t,10,0.0005)$  and $\operatorname{Linear}(0.05) / \operatorname{Normal}(1,0.1)$ for the first \textbf{$10{,}000$} steps. The dotted line shows the learning cutoff, meaning that the algorithms don't see the cross at $4{,}500$ where Action B starts to result in a better value than Action A.}
    \Description{A line plot over 10,000 steps comparing the reward-to-duration ratio of Action B (SinLogD/CosLogD) against that of Action A (Linear/Normal). A vertical black line marks the learning cutoff at step 1,000. Action A appears superior within the cutoff, but Action B overtakes it around step 4,500.}
    \label{fig:ratio_at_10000}
\end{figure}

In the experiments, we vary the difficulty of the settings by changing only the baseline $\mathrm{log\_scale}$ parameter controlling the relation between the rewards and durations of action B.  We test baseline values between $10^{-5}$ and $10^{-1}$ with 30 log-scale steps in this range. 

\cref{fig:ratio_at_10000_extra} shows that smaller values lead to more difficult problems. $\mathrm{log\_scale}$ values greater than 0.003 are ``easier'' because Action B dominates within the $1{,}000$ steps of the learning. Smaller values make the settings harder (moving the crossover point further and further away). 

\begin{figure}[htbp]
    \centering

\begin{subfigure}[b]{0.48\linewidth}
  \centering
  \includegraphics[width=\linewidth]{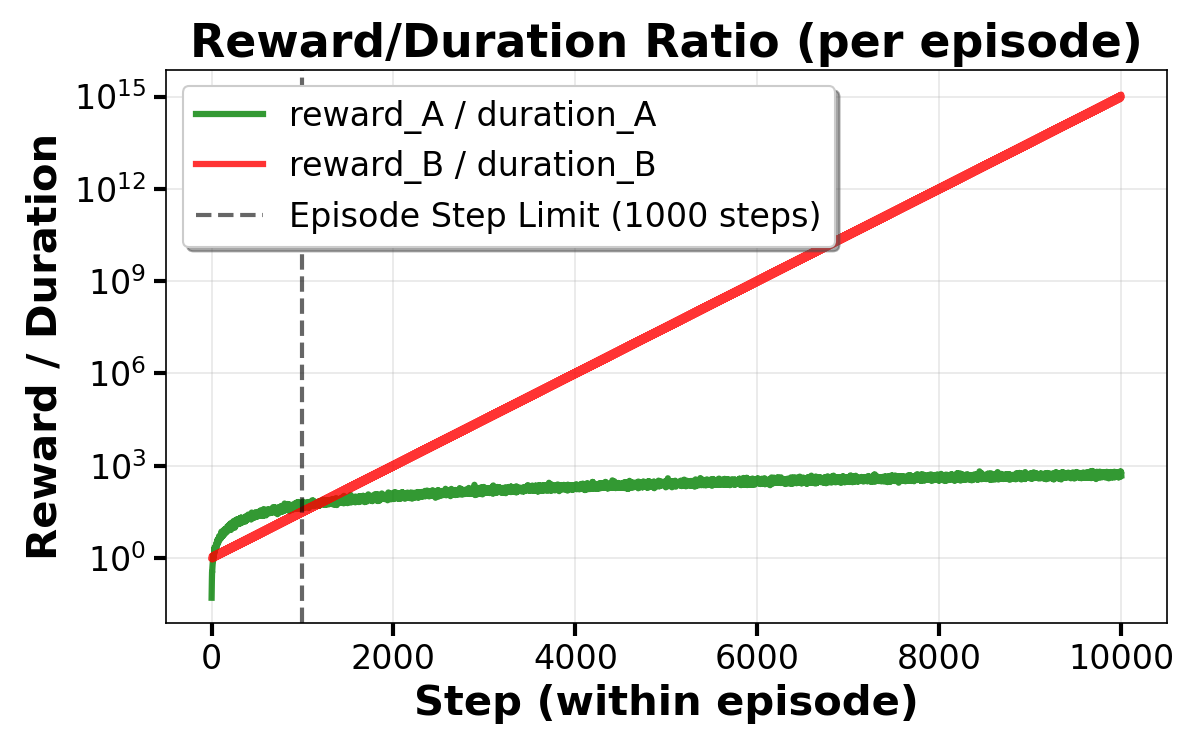}
  \caption{$\operatorname{SinLogD}(t,10,0.003)\,/$ \\ $\operatorname{CosLogD}(t,10,0.0015)$}
    \end{subfigure}
    \hfill
    \begin{subfigure}[b]{0.48\linewidth}
      \centering
      \includegraphics[width=\linewidth]{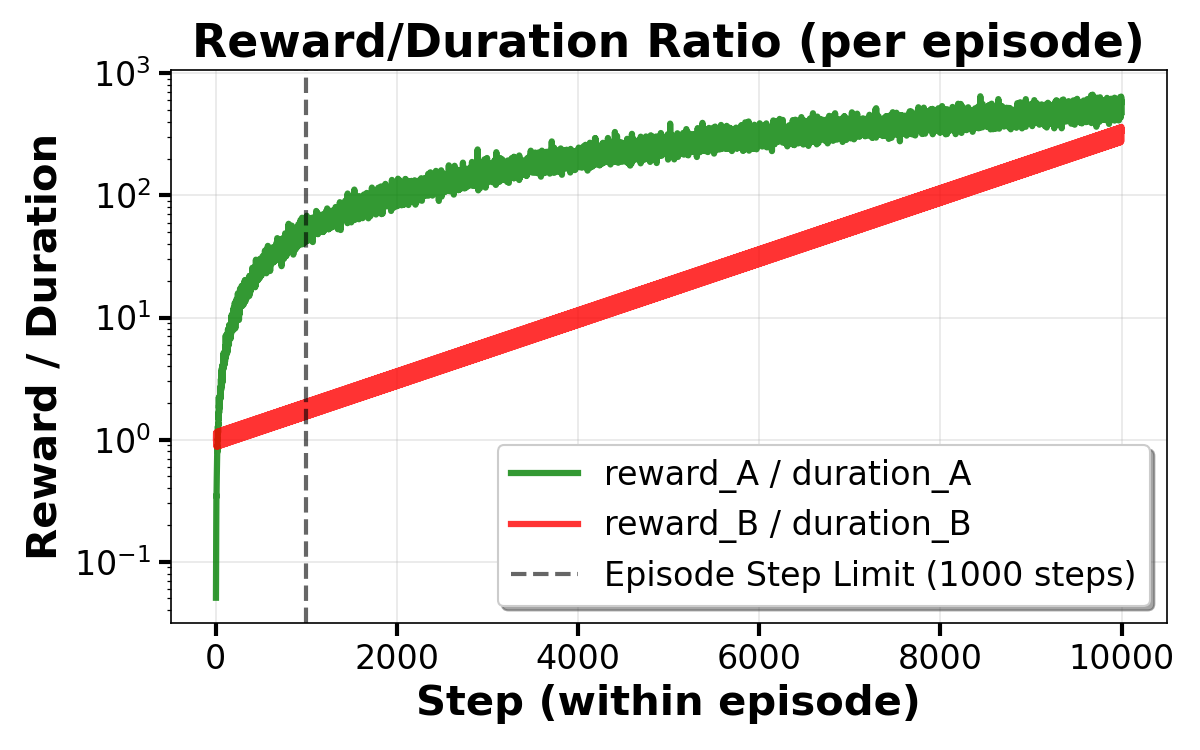}
      \caption{$\operatorname{SinLogD}(t,10,0.0005)\,/$ \\ $\operatorname{CosLogD}(t,10,0.00025)$}
    \end{subfigure}
        \caption{Different log scales in relation to $\operatorname{Linear}(0.05) / \operatorname{Normal}(1,0.1)$ for the first \textbf{$10{,}000$} steps. As the log scale value lowers---e.g., (b) above---the cross-over point takes place later, at a step unobserved by the learning algorithm.}
    \Description{Two side-by-side line plots comparing reward-to-duration ratios over 10,000 steps for two different log-scale settings. The left plot uses a log scale of 0.003 and the right uses 0.0005. Lower log-scale values push the crossover point beyond the learning cutoff, making the task harder.}
    \label{fig:ratio_at_10000_extra}
\end{figure}

Each algorithm is run on 4 episodes, each of 1000 steps, with a fixed exploration rate of 0.2.  To assure ourselves of some robustness to the learning rate parameters $\alpha, \beta$, we use $\alpha\in [10^{-4}, 0.1]$ and $\beta\in [10^{-4},10^{-1}]$. For both, we use 20 even log-scale steps, i.e., a total of 400 $\alpha, \beta$ combinations. Note that SMART does not use a $\beta$ parameter. 

\subsubsection{Simulation Results}
\label{sec:sim-res}
\cref{fig:robustness} plots the results from the experiments. The horizontal axis measures the difficulty of the SMDP environment, using a log-scale. Each point along the axis is a baseline value used for the $\mathrm{log\_scale}$ parameter for action B, as described. As we move from left to right, the learning settings become harder and harder, as the optimality of action B is not observable by the algorithms, and the cross-over point is further away in the horizon.  The vertical axis measures the percentage of successful trials to learn the optimal policy (action B), out of the 400 attempts (with different $\alpha, \beta$ parameters). 

The figure shows that \harmonicr is more robust and successful in these experiments. It is shown to be more robust, because its success percentage is higher in all cases (i.e., it is less sensitive to specific $\alpha,\beta$ learning rates).  It is shown to be more successful, as it continues to maintain a high percentage of successes, even as the environment becomes too difficult for SMART and Relaxed SMART, which drop to 0\% success around the middle of the horizontal axis.

\begin{figure}[h]
    \centering
    \includegraphics[width=1\linewidth]{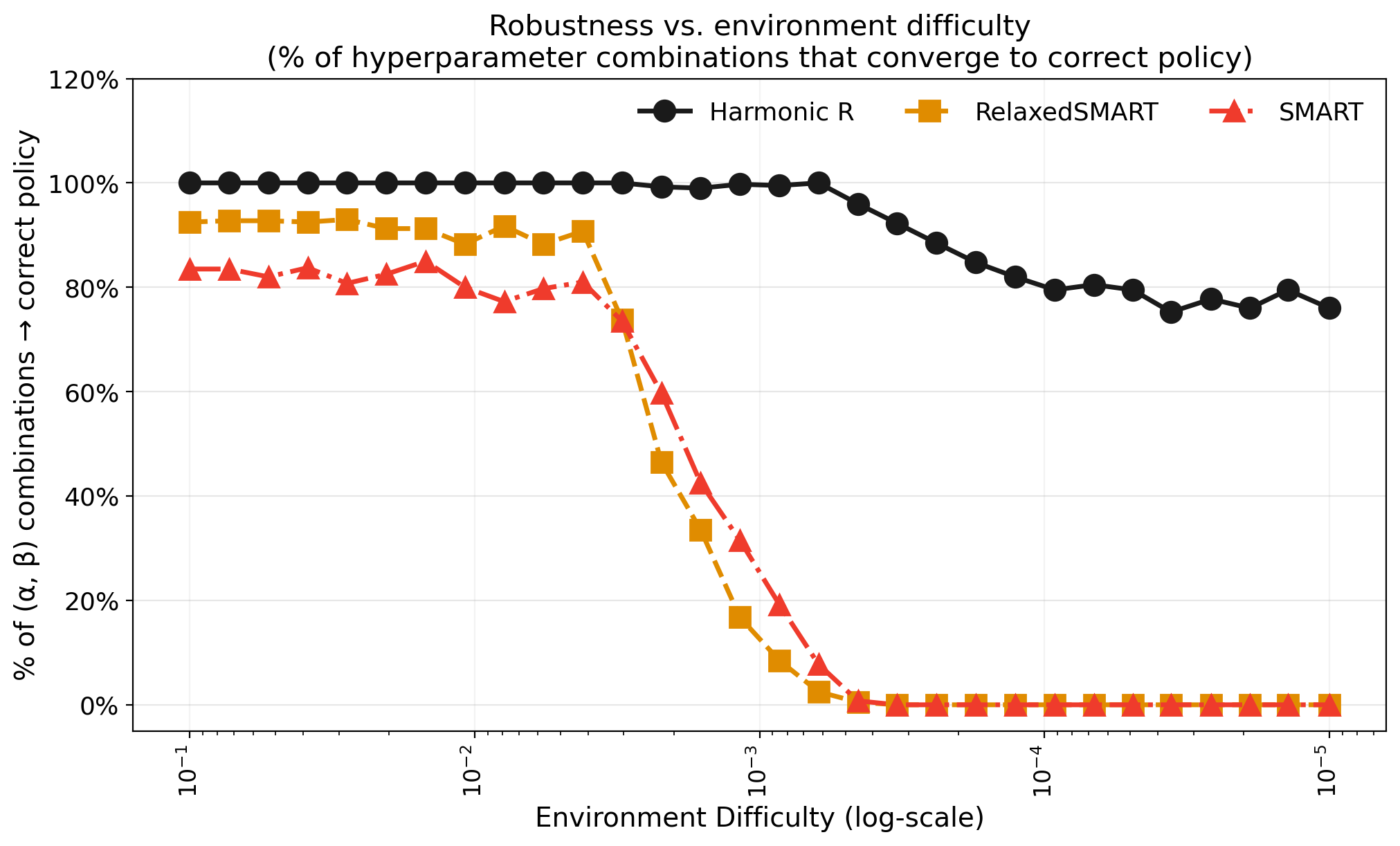}
    \caption{Comparing the success rate of the different algorithms, as learning problems become harder (left to right). The \harmonicr algorithm clearly dominates (shown in black).}
    \Description{A line chart showing the success rate of Harmonic R-Learning, SMART, and Relaxed-SMART algorithms across varying log-scale difficulty levels. Harmonic maintains a higher success rate at lower (harder) log-scale values, where SMART and Relaxed-SMART rapidly deteriorate.}
    \label{fig:robustness}
\end{figure}

These results are very promising, but this simple environment has only positive values and is artificially structured.
In the next set of experiments, we deal with a challenging environment that was not constructed manually and has negative and zero values. 

\subsection{Bitcoin environment}
\label{sec:bitcoin}
We test the algorithms on highly volatile, generally non-stationary data: real-world Bitcoin (BTC) market trading data. Bitcoin is uniquely suitable: it trades continuously (24/7) with high liquidity and dense minute-level observations, enabling rate-sensitive, infinite-horizon analysis without episodic boundaries. Publicly available data provide over 500{,}000 one-minute observations per year, with gapless bars where each close equals the next open\footnote{Data source: \url{https://www.kaggle.com/datasets/swaptr/bitcoin-historical-data}, licensed for open use.}.

BTC exhibits documented non-stationary features: heavy tails, volatility clustering, regime shifts, and explicit ruin barriers (which would adversarially stress robustness)~\cite{drozdz2018bitcoin,dehouche2021scalemattersdailyweekly,tang2025stylizedfactshighfrequencybitcoin,bodek2013problem}. These properties make simulated Bitcoin trading a very demanding RL testbed. In addition, publicly-available data permits precise ablations.

\subsubsection{Experimental Setup}\label{sec:setup}
We compare the behavior of the three algorithms: Relaxed-SMART, SMART, and \harmonicr. As a reminder, SMART addresses rates, but uses the long-term sample arithmetic average to approximate the expected optimal reward rate $\rho$---brittle when rewards are non-stationary. Relaxed-SMART uses EMA to handle non-stationarity, but bases the approximation on the ratio of averages, which assumes the rewards and durations are independent.  The arithmetic mean. \harmonicr uses the stochastically estimated \hname (\cref{eq:harmonic-update-rho}) to target the rate directly.

\textbf{Data Preparation.}
We use the entire set of minute-by-minute BTC historical prices (2012 to mid-September 2025). To eliminate dependency on starting conditions, we partitioned the data into 21 non-overlapping segments, each (except for the last one) with 350{,}000 one-minute observations. This supports walk-forward evaluation without look-ahead leakage.

At each decision point (every minute), the algorithm is given the choice of either buying a bitcoin or selling one.  It gets a reward depending not only on the direction of the decision, but also on the difference between the price at the moment the action was executed and the price at the end of the one-minute interval.  

An MDP version of this task would assume a fixed duration of action execution, e.g., at every minute, a decision is made, and the action is taken. So, for instance, the reward depends on the entire one-minute price difference.  An SMDP version of this task would instead account for the duration of the action itself.  If an action takes $\tau$ seconds, the reward is computed by the difference between the end price and the price $\tau$ seconds from the 1-minute interval beginning (determined in these experiments by a linear interpolation between the opening and closing minute prices). Quicker actions thus use more of the 1-minute price difference.

We generate two SMDP versions of the data.  In the \emph{random} version, the duration of actions is selected uniformly from the interval of 5 to 45 seconds. In the second version (\emph{scaled}), the duration is \textit{scaled} by the value of the reward using min-max scaling according to the minimum and maximum reward values. Larger rewards impose larger durations, and vice versa (within  5--45 seconds).

\textbf{Experiment Parameters.}
At each timestamp, the value of BTC can either increase or decrease. Therefore, the state space is defined as a window capturing the recent trend directions of BTC. For example, with a state size of three, the state represents the last three movement directions of BTC, such as $\{up, up, down\}$ or $\{down, up, up\}$.  We investigate four values: 3, 6, 9, and 12.

We sweep over combinations of state sizes (affecting all algorithms) and mean smoothing parameter \(\beta\) with values: 0.01, 0.05, and 0.1 (affecting only Harmonic and Relaxed-SMART). Each (state, \(\beta\)) pair is evaluated independently on all 21 segments for both delay time windows. 
All algorithms used an exploration rate of 0.2 with a $\epsilon$ decay rate of 0.999 and a learning rate of 0.001.  For each of the 21 files and their 350{,}000 data points, every algorithm was run with 30 different seeds.

\textbf{Evaluation Metric.}
We measure the accumulating rewards (profits and losses) for each algorithm instance, at each step. The final result of the accumulated value is therefore the accumulated reward over an entire segment. The results from 30 runs are averaged and used to represent the performance of each agent.

For each window size, method, and (state, $\beta$), we aggregate the values across all agents, keeping the mean and std of the accumulated reward at each timestamp. Only on-policy actions are taken into account, any exploration action is ignored for fairness.

\subsubsection{Results}\label{sec:results}




To compare all $\beta$ and state size values for all segments, we summarize the results as heatmaps in two axes: the state size and the $\beta$ values. These are shown in \cref{fig:H_vs_all_random} for the random SMDP version, and in \cref{fig:H_vs_all_scaled} for the scaled SMDP version.  In both figures, subfigure (a) shows the heatmap of wins (percentage in which \harmonicr was better, out of the 21 cases) against SMART, and subfigure (b) shows the same against Relaxed SMART.  Values around 0.5 hint at equivalent performance.


\begin{figure}[htbp]
    \centering

    \begin{subfigure}[b]{0.49\linewidth}
        \centering
        \includegraphics[width=\linewidth]{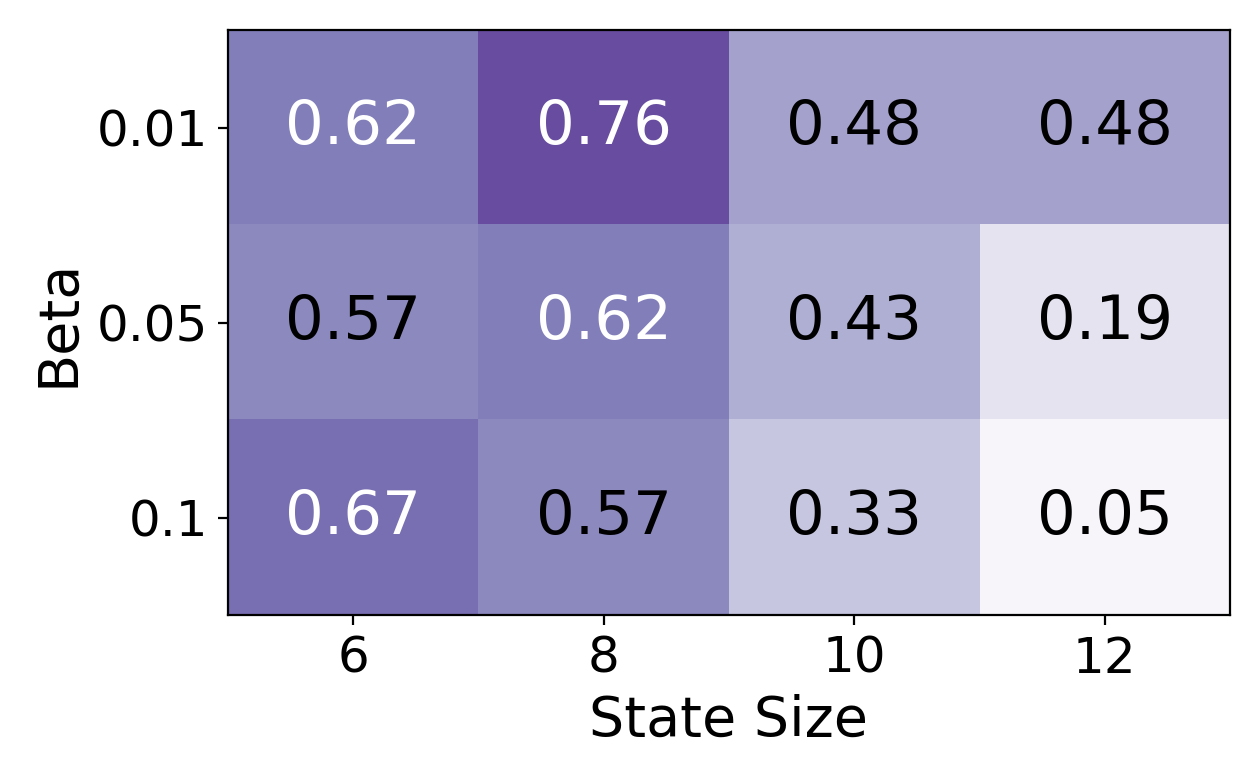}
        \caption{Harmonic vs SMART}
    \end{subfigure}
    \hfill
    \begin{subfigure}[b]{0.49\linewidth}
        \centering
        \includegraphics[width=\linewidth]{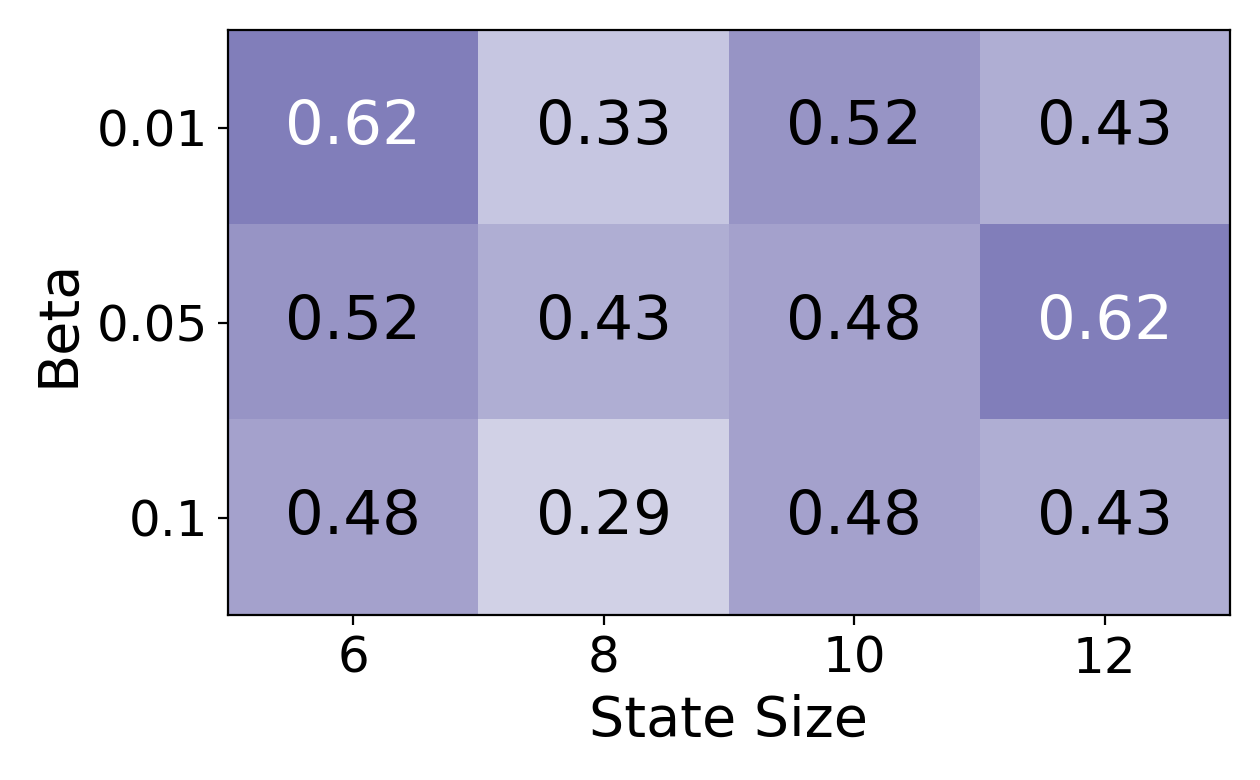}
        \caption{Harmonic vs Relaxed-SMART \\}
    \end{subfigure}

    \caption{Win-Ratio of harmonic for all 21 segments of the \textit{random} version. The rewards and durations are independent. Harmonic is not performing better than SMART and Relaxed-SMART.}
    \Description{Two heatmaps showing the win ratio of Harmonic R-Learning against SMART (left) and Relaxed-SMART (right) across state sizes and beta values in the random-delay environment. The ratios are near 0.5, indicating no significant advantage for any algorithm when rewards and durations are independent.}
    \label{fig:H_vs_all_random}
\end{figure}
\vspace{-10pt}

\begin{figure}[htbp]
    \centering

    \begin{subfigure}[b]{0.49\linewidth}
        \centering
        \includegraphics[width=\linewidth]{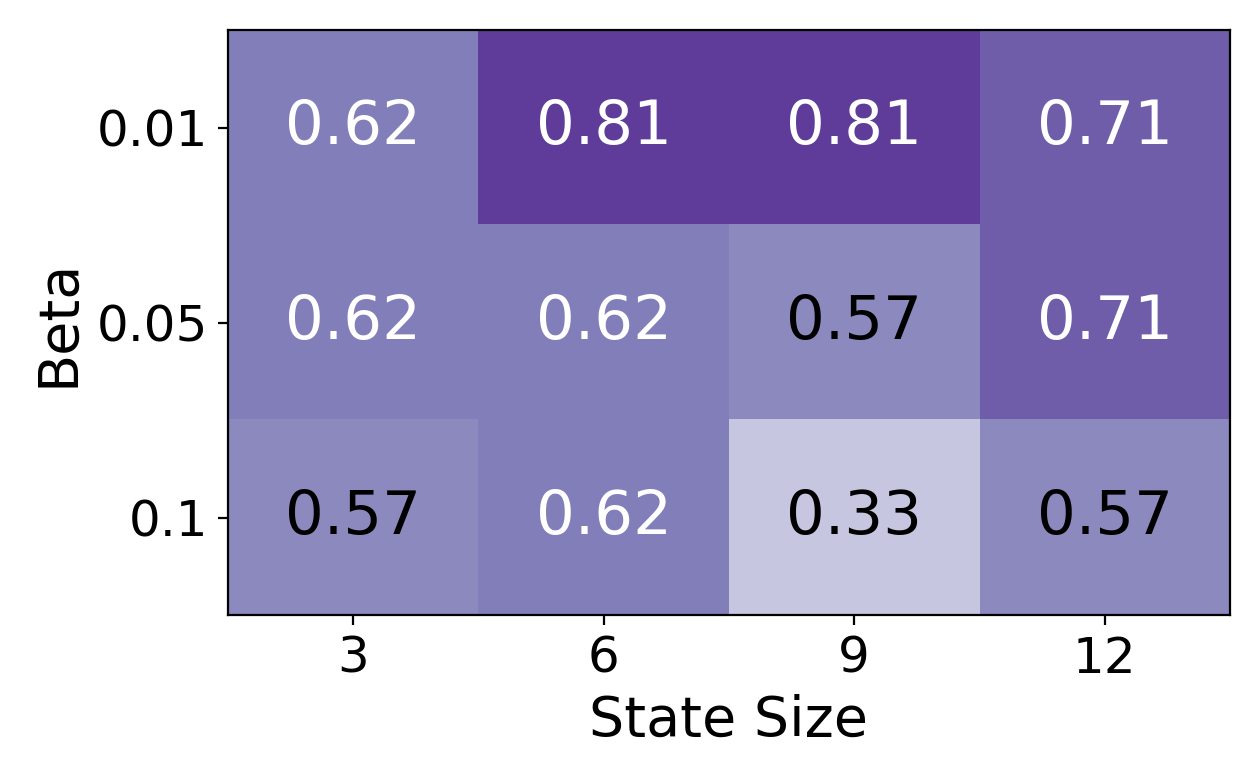}
        \caption{Harmonic vs SMART}
    \end{subfigure}
    \hfill
    \begin{subfigure}[b]{0.49\linewidth}
        \centering
        \includegraphics[width=\linewidth]{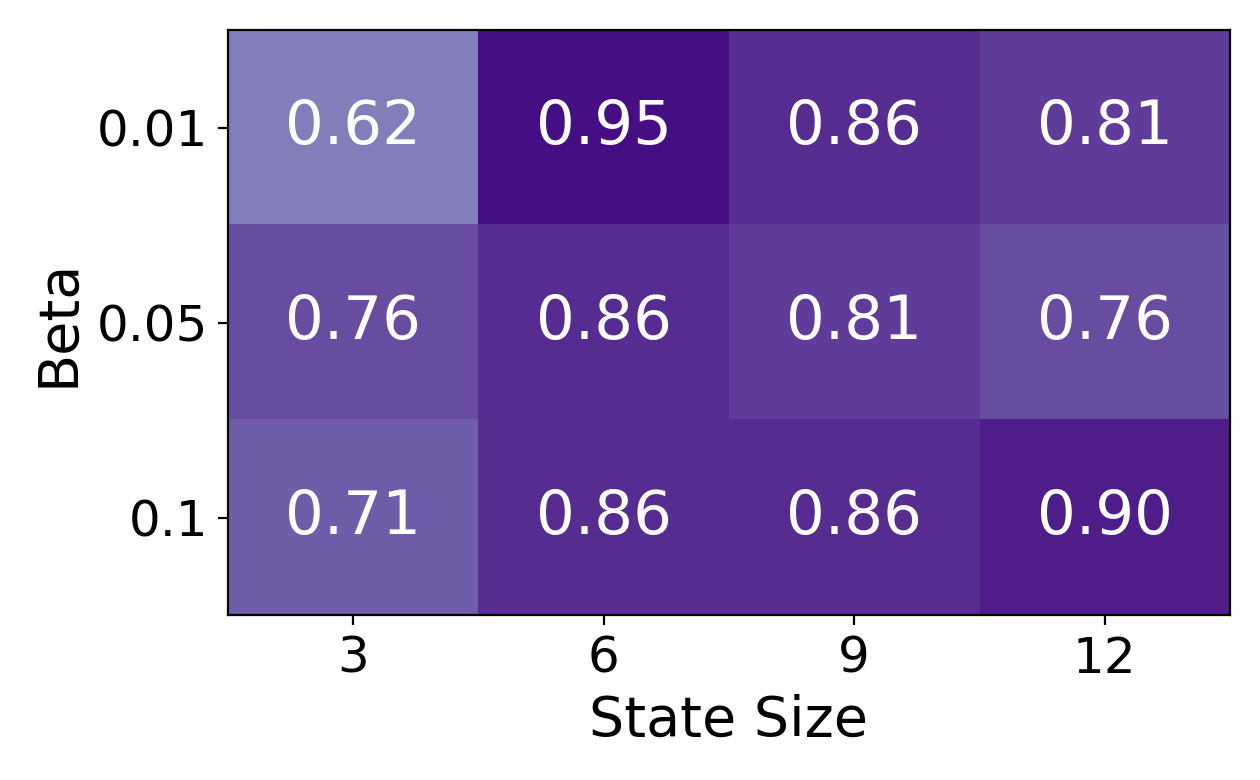}
        \caption{Harmonic vs Relaxed-SMART}
    \end{subfigure}

    \caption{Win-Ratio of harmonic for all 21 segments of the \textit{scaled} version. The dependency of the reward and time gives the Harmonic algorithm a significant advantage, creating a clear win in most cases.}
    \Description{Two heatmaps showing the win ratio of Harmonic R-Learning against SMART (left) and Relaxed-SMART (right) across state sizes and beta values in the scaled-delay environment. Harmonic shows a clear and consistent advantage, with win ratios predominantly above 0.5 across most configurations.}
    \label{fig:H_vs_all_scaled}
\end{figure}


\vspace{-8pt}
\section{Conclusions and Future Work}\label{sec:fin}
This paper makes several contributions. First, it analyzes the requirements for an average rewards learning algorithm for semi-Markov decision processes. It shows that SMART and Relaxed-SMART, the only algorithms for this task, as a result of using the sample average, can fail when rewards are not in a linear relation to time. To address this, we introduce a \hname, a general mean operator that generalizes the harmonic mean where the latter is applicable, and obeys key axiomatic properties of general means. We proved the generalization and the inclusion of the \hname\ in the class of general means. We also proved that it is not a member of the quasi-arithmetic means class.  As the \hname\ generalizes the harmonic mean, it is mathematically correct for use in averaging reward rates, as needed.  We demonstrate empirically, using a proof-of-concept simulation and a highly volatile real-world Bitcoin trading data (millions of data points), that use of \harmonicr
leads to improved results over existing algorithms for average reward maximization.

\bibliographystyle{ACM-Reference-Format}
\bibliography{full,finrl,harmonic}


\appendix
\section{Proof of Theorem ~\ref{thm:3collections-dependent}}
\label{sec:full-proof}


In real world scenarios, we need an average rate method that can extend to treat scenarios of: positive, negative and zero value rewards with non zero time. Extending our proof of Theorem \ref{thm:noconsistency}, below we show that this proof holds when the joint distribution of rewards (where positive, negative and zero values are commingled) is decomposed into its three components, as needed to generalize equation \ref{eq:harmonic-mixed}. To do this, we work with $\sigma$-algebras in this proof. By way of brief mention, a $\sigma$-algebra is a neat mathematical representation of a set of variables where the requirement that once we can show a property about a certain variable (such as reward or time), we can then extrapolate it to also talking about any combinations of variables (e.g., “all rewards" or "all time periods"). In this way, the $\sigma$-algebra defines a property (or properties) of a collection of events where unbounded random variables $r$ and $\tau$ generate the event collections $\sigma(X)$ and $\sigma(T)$ that contain all measurable statements about $r$ and $\tau$, respectively.  In this way this extends to infinite horizon R-learning tasks naturally.

The motivation for this proof is to show that, even when harmonic means are decomposed into their mutually exclusive partitions based on sign, the resulting distributions of rewards and time, retain any dependence structure (or more formally, remain not independent). This lingering dependence structure in effect, guarantees that harmonic mean will exhibit a  $\rho$ value that differs from that produced by SMART and Relaxed-SMART even when decomposed into a calculation using the positive, negative and zero components of $r$ from equation \ref{eq:harmonic-pos-neg}.  Hence we posit, as we did in stating \cref{thm:3collections-dependent}: \textit{if the distributions of all rewards is \textbf{not independent} of the distribution of all holding times, then when rewards are partitioned based on their sign (positive, negative, zero), at least one of three reward partition collections must also be dependent on the respective partition of holding times.}

To expand further consider the following. 
Let $(\Omega,\mathcal{F},\mathbb{P})$ be a probability space. Let
    \[
        r:\Omega\to\mathbb{R},\qquad \tau:\Omega\to\mathbb{R}
    \]
be random variables (with arbitrary distributions), where $r$ denotes the rewards and $\tau$ denotes the holding time.
In an SMDP, a single execution of the policy from one decision epoch to the next produces a random \emph{experience segment} (also called a sojourn), e.g.\ a tuple
    \[
    \omega := (s_0,a_0,r_0,\tau_0, \; s_1,a_1,r_1,\tau_1, \; \ldots,\; s_{N}, a_{N}, r_{N}, \tau_{N}) 
    \]

containing the visited states, chosen actions, intermediate rewards, and elapsed times until the next decision epoch (or termination). The set $\Omega$ is the collection of all such possible segments $\omega$; in other words, $\Omega$ is the sample space of all outcomes that could be generated by the environment dynamics together with the (possibly stochastic)
policy.

The $\sigma$-algebra $\mathcal{F}$ is the collection of events (subsets of $\Omega$) to which we assign probabilities, such as ``the segment terminates within $2$ seconds'' or
``the total reward exceeds $10$''. The probability measure $\mathbb{P}$ is the distribution over segments induced by the environment's stochastic transition-and-holding-time kernel together with the policy.

A random variable is a measurable function of the realized segment $\omega\in\Omega$. In particular, we define
    \[
    r:\Omega\to\mathbb{R},\qquad \tau:\Omega\to(0,\infty)
    \]
by letting $r(\omega)$ be the total reward accrued during the segment $\omega$ (e.g.\ cumulative reward until the next decision epoch) and letting $\tau(\omega)$ be the corresponding holding time (elapsed physical time) of that segment.

Now, assume we decompose the rewards space into three disjoint collections (positive, negative and zero rewards). Formally, let
    $$
    H_{+},\,H_{-},\,H_{0}\subseteq \mathbb{R},
    $$
    $$
    \qquad
    H_{+}\cap H_{-}=\emptyset,
    H_{+}\cap H_{0}=\emptyset,
    H_{-}\cap H_{0}=\emptyset,
    H_{+}\cup H_{-}\cup H_{0}=\mathbb{R}.
    $$

Define the three reward-collection events
    \[
    A_{+}:=\{r\in H_{+}\},\qquad A_{-}:=\{r\in H_{-}\},\qquad A_{0}:=\{r\in H_{0}\}.
    \]

Equivalently, define the reward-collection label
    \[
        Z:\Omega\to\{+,-,0\},
        \qquad
        Z(\omega)=
        \begin{cases}
        + & \text{if } r(\omega)\in H_{+},\\
        - & \text{if } r(\omega)\in H_{-},\\
        0 & \text{if } r(\omega)\in H_{0}.
        \end{cases}
    \]

Then, if the reward-collection label is not independent of time, i.e.

    \[
    Z \not\!\perp\!\!\!\perp T .
    \]

it follows that \emph{at least one} of the three collections is itself not independent of time. That is, there exists
$\kappa\in\{+,-,0\}$ and there exists an event $B\in\sigma(T)$ such that
\[
\mathbb{P}(A_{\kappa}\cap B)\neq \mathbb{P}(A_{\kappa})\,\mathbb{P}(B).
\]

\textit{Note:} the proof below relies on the property of \emph{finite additivity} of probability, which is a direct consequence of Kolmogorov's axioms (Kolmogorov, 1933): for pairwise disjoint events $E_1,\dots,E_m\in\mathcal{F}$,
\[
\mathbb{P}\!\left(\bigcup_{j=1}^{m} E_j\right)=\sum_{j=1}^{m}\mathbb{P}(E_j).
\]

\begin{proof}
We prove by contradiction. Assume that $Z \ne\ \perp\!\!\!\perp\ T$, but that nevertheless
\emph{each} of the three (disjoint) reward collections, is independent of time. In $\sigma$-algebra notation, this means that for every
\[
B\in\sigma(T),
\]
we have

\begin{align}
\label{eq:each-collection-ind}
\qquad
\mathbb{P}(A_{+}\cap B)=\mathbb{P}(A_{+})\mathbb{P}(B) \\
\mathbb{P}(A_{-}\cap B)=\mathbb{P}(A_{-})\mathbb{P}(B) \\
\mathbb{P}(A_{0}\cap B)=\mathbb{P}(A_{0})\mathbb{P}(B)
\end{align}

Now, note that $\sigma(Z)$ is generated by the three atoms $A_{+},A_{-},A_{0}$. In particular, every event
$C\in\sigma(Z)$ can be written as a union of a subcollection of these three events. That is, for every $C\in\sigma(Z)$
there exists a set $S\subseteq\{+,-,0\}$ such that
\begin{equation}\label{eq:C-as-union}
C=\bigcup_{\kappa\in S} A_{\kappa}.
\end{equation}

Fix any $C\in\sigma(Z)$ and any $B\in\sigma(T)$. Using \eqref{eq:C-as-union} and the distributive property of intersection,
\[
C\cap B
=
\left(\bigcup_{\kappa\in S} A_{\kappa}\right)\cap B
=
\bigcup_{\kappa\in S} (A_{\kappa}\cap B).
\]
Since the events $A_{+},A_{-},A_{0}$ are disjoint, the events $(A_{\kappa}\cap B)$ are also disjoint across $\kappa\in S$.
Therefore, by finite additivity (Kolmogorov, 1933),
\[
\mathbb{P}(C\cap B)
=
\sum_{\kappa\in S}\mathbb{P}(A_{\kappa}\cap B).
\]
Applying \eqref{eq:each-collection-ind} to each term,
\[
\mathbb{P}(C\cap B)
=
\sum_{\kappa\in S}\mathbb{P}(A_{\kappa})\mathbb{P}(B)
=
\mathbb{P}(B)\sum_{\kappa\in S}\mathbb{P}(A_{\kappa}).
\]
Finally, since the $A_{\kappa}$ are disjoint, finite additivity gives
\[
\sum_{\kappa\in S}\mathbb{P}(A_{\kappa})
=
\mathbb{P}\!\left(\bigcup_{\kappa\in S} A_{\kappa}\right)
=
\mathbb{P}(C),
\]
and therefore
\begin{equation}\label{eq:sigmaZ-sigmaT-ind}
\mathbb{P}(C\cap B)=\mathbb{P}(C)\mathbb{P}(B)
\qquad
\text{for all } C\in\sigma(Z)\ \text{and all } B\in\sigma(T).
\end{equation}

Equation \eqref{eq:sigmaZ-sigmaT-ind} is exactly the definition of independence of the $\sigma$-algebras $\sigma(Z)$ and
$\sigma(T)$, i.e.
\[
\sigma(Z)\ \perp\!\!\!\perp\ \sigma(T),
\]
which implies
\[
Z\ \perp\!\!\!\perp\ T.
\]
This contradicts our assumption that $Z \ne\ \perp\!\!\!\perp\ T$. Hence, our supposition that all three collections are
independent of time must be false. Therefore, at least one of the three collections must be dependent on time, i.e.\ there exists
$\kappa\in\{+,-,0\}$ and $B\in\sigma(T)$ such that
\[
\mathbb{P}(A_{\kappa}\cap B)\neq \mathbb{P}(A_{\kappa})\,\mathbb{P}(B).
\]
Q.E.D

\end{proof}

\end{document}